\newcommand{\bc}{\mbox{\textit{c}}}
\newcommand{\bbc}{\mbox{\textit{\large c}}}
\newtheorem{definition}{Definition}
\newtheorem{theorem}{Theorem}
\newtheorem{remark}{Remark}
\newtheorem{lemma}[theorem]{Lemma}
\title{Multi-Complementary and Unlabeled Learning for Arbitrary Losses and Models}
\author[1]{Yuzhou Cao}
\author[1]{Shuqi Liu}
\author[1]{Yitian Xu}
\affil[1]{College of Science, China Agricultural University}
\date{}
\begin{document}
\maketitle
\begin{abstract}
A weakly-supervised learning framework named as \textit{complementary-label learning} has been proposed recently, where each sample is equipped with a single complementary label that denotes one of the classes the sample does \textit{not} belong to. However, the existing complementary-label learning methods cannot learn from the easily accessible unlabeled samples and samples with multiple complementary labels, which are more informative. In this paper, to remove these limitations, we propose the novel \textit{multi-complementary and unlabeled learning} framework that allows unbiased estimation of classification risk from samples with any number of complementary labels and unlabeled samples, for arbitrary loss functions and models. We first give an unbiased estimator of the classification risk from samples with multiple complementary labels, and then further improve the estimator by incorporating unlabeled samples into the risk formulation. The estimation error bounds show that the proposed methods are in the optimal parametric convergence rate. Finally, the experiments on both linear and deep models show the effectiveness of our methods.
\end{abstract}

\section{Introduction}
The ordinary supervised classification problems require that each training sample should be equipped with an exact label that denotes the class the sample belongs to. However, the preparation of massive exactly labeled data is usually laborious and unrealistic in practical. Therefore, a lot of studies on learning from weak supervision have been made to tackle this problem in different scenarios, e.g. semi-supervised learning \cite{DBLP:books/mit/06/CSZ2006,DBLP:conf/icml/SakaiPNS17,DBLP:conf/ijcai/ZhangZ18,SSL1,SSL2}, partial label learning \cite{DBLP:journals/jmlr/CourST11,DBLP:journals/tkde/ZhangYT17}, and positive-unlabeled learning \cite{DBLP:conf/kdd/ElkanN08,DBLP:conf/icml/PlessisNS15,DBLP:journals/pami/SansoneNZ19,DBLP:conf/icml/HsiehNS19}. Recently, another weakly-supervised learning scenario called \textit{complementary-label learning} (CLL) has been proposed. In the CLL setting, each ordinary label is substituted with the complementary label, which denotes one of the classes that a training sample does \textit{not} belong to. It is obvious that the preparation of complementarily labeled data is much more labor-saving than that of ordinarily labeled data.

The complementary-label learning problem has been investigated in previous studies \cite{DBLP:conf/nips/IshidaNHS17,DBLP:conf/eccv/YuLGT18,DBLP:conf/icml/IshidaNMS19}. In these works, different risk estimators were proposed to recover classification risk only from complementarily labeled data under the empirical risk minimization (ERM) framework. In \cite{DBLP:conf/nips/IshidaNHS17} and \cite{DBLP:conf/eccv/YuLGT18}, the proposed risk estimators had restrictions on loss functions and unbiasedness respectively. \cite{DBLP:conf/icml/IshidaNMS19} overcame the shortcomings by giving an unbiased risk estimator without any restriction on models and loss functions while guaranteeing the superior performance in terms of classification accuracy over the previous two methods.

It is noticeable that in these works, each training sample was given only a single complementary label. However, in quite a few cases, the training samples can be multi-complementarily labeled, namely each training sample is equipped with multiple complementary labels. For example, in the stage of data annotation, an annotator who has no idea of a training sample's exact label may be able to recognize multiple classes that the sample does not belong to, which results in a sample with multiple complementary labels. In crowdsourcing scenario \cite{Crowdsourcing,CS}, the quality of crowdsourcing label is especially crucial \cite{DBLP:journals/chinaf/WangZ15a}. Instead of being ordinarily labeled, a sample can be complementarily labeled to alleviate the effect of low-quality noisy crowdsourcing labels. Since a sample can be complementarily labeled by different crowdworkers, each training sample may have more than one complementary label. Moreover, compared with the single-complementary-label setting in previous CLL studies, the samples with multiple complementary labels are more informative. To sum up, a framework for learning from data with arbitrary\footnote{ `Arbitrary' means the samples can be equipped with different numbers of complementary labels.} number of complementary labels is in demand.

Furthermore, the information concealed in the easily accessible unlabeled data proved to be helpful in many other weakly-supervised learning scenario both theoretically and practically \cite{shai,incomplete,partial_unlabeled}. Therefore, it is promising to further enhance the capability of CLL framework by incorporating the unlabeled data.

In this paper, we study the \textit{multi-complementary label and unlabeled learning} (MCUL) problem, where both multi-complementarily labeled data and unlabeled data are leveraged to obtain better classifiers. In our method, we propose a novel unbiased risk estimator for MCUL problem with no limitation on loss functions and models. By using a mild assumption, we first derive the risk estimator for \textit{multi-complementary label learning} (MCL) problem. Then we further utilize the unlabeled data to construct the risk estimator for MCUL problem. With no more assumption on loss functions and models, we show that the estimation error bounds of MCL and MCUL are in optimal parametric convergence rate \cite{vapnik}. The effectiveness of the proposed MCUL is demonstrated through experiments on both linear and deep models.

The main contributions are summarized as follows:
\begin{itemize}
\item We propose the novel MCL framework that allows unbiased estimation of the classification risk only from samples with arbitrary number of complementary labels and can be applied on \textit{arbitrary losses and models}.

\item We further propose the MCUL framework to utilize the unlabeled samples, which are neglected in previous studies on CLL\cite{DBLP:conf/nips/IshidaNHS17,DBLP:conf/eccv/YuLGT18,DBLP:conf/icml/IshidaNMS19} and validate the benefits of the incorporation of unlabeled samples both experimentally and theoretically.

\item The previous CLL framework and ordinary classification problems are proven to be special cases of the MCUL framework, which  shows the comprehensiveness of the MCUL framework as a weakly-supervised leaning framework.
\end{itemize}

This rest of this paper is organized as follows. We give the review of complementary-label learning in Section 2. The MCL and MCUL frameworks are proposed in Section 3. Moreover, we analyse the estimation error bounds of the proposed methods in Section 4 and discuss the helpfulness of integrating the class-prior information in Section 5. Finally, we give the experimental results of our frameworks on both linear and deep models in Section 6 and conclude the paper in Section 7. The detailed proof is shown in the appendix.
\section{Review of Complementary-Label Learning}
To begin with, we first show the classification risk of learning from ordinary labels and then review how the previous risk estimators of learning from complementarily labeled samples recover the classification risk under the ERM framework.
\subsection{Ordinary Classification Problem}
Let's denote the feature space with $\mathcal{X}\in\mathbb{R}^{d}$ and $\mathcal{Y}=\{1,2,\ldots,K\}$ is the label space. The training samples are drawn independently and identically from the unknown distribution $\mathcal{D}$, which is the joint distribution over $\mathcal{X}\times\mathcal{Y}$ with density $p(\bm{x},y)$. Then the critical work is to find a decision function $\bm{g}:\mathcal{X}\rightarrow\mathbb{R}^{K}$ that minimizes the classification risk with loss function $\ell:\mathcal{X}\times\mathcal{Y}\rightarrow\mathbb{R}^{+}$:
\begin{equation}
\label{E1}
R(g):=\mathbb{E}_{p(\bm{x},y)}[\ell(\bm{g}(\bm{x}),y)].
\end{equation}

Since the density $p(\bm{x},y)$ is unknown, the classification risk is approximated by the empirical risk:
\begin{equation}
\label{ordinary}
\hat{R}(g):=\frac{1}{n}\sum_{i=1}^{n}\ell(\bm{g}(\bm{x}_{i}),y_{i}).
\end{equation}
\subsection{Complementary-Label Learning}
In the CLL setting, each sample is equipped with a complementary label. The complementarily labeled data $\{(\bm{x}_i,\overline{y}_{i})\}_{i=1}^{n}$ are sampled independently and identically from a joint distribution with density $\overline{p}(\bm{x},\overline{y})$.

In \cite{DBLP:conf/nips/IshidaNHS17}, an assumption on density $\overline{p}(\bm{x},\overline{y})$ was made:
\begin{equation}
\label{E2}
\overline{p}(\bm{x},\overline{y})=\frac{1}{K-1}\sum_{y\neq\overline{y}}p(\bm{x},y).
\end{equation}
Under this assumption, \cite{DBLP:conf/nips/IshidaNHS17} proved that classification risk (\ref{E1}) can be recovered by an unbiased estimator only from complementarily labeled data. However, the loss functions are restricted to one-versus-all and pairwise comparison multi-class loss functions \cite{TZhang}. Moreover, the binary loss functions $\ell^{'}(z): \mathbb{R}\rightarrow\mathbb{R}^{+}$ used in the two multi-class loss functions are required to fulfill symmetric condition: $\ell^{'}(z)\!+\!\ell^{'}(-z)=1$. Obviously, the popular softmax cross-entropy loss and all the other convex loss functions do not meet these conditions. Since the softmax cross-entropy loss is widely used in deep learning, this requirement will be a serious limitation for the application of state-of-the-art deep models.

To make deep models available, \cite{DBLP:conf/eccv/YuLGT18} proposed another risk estimator limited to softmax cross-entropy loss. Though the risk estimator is not necessarily unbiased, the method is ensured to identify the optimal classifier that minimizes classification risk (\ref{E1}) by minimizing its learning object. The method also introduces bias into the choice of complementary labels. However, in the stages of bias estimation, ordinarily labeled data are required. The severe requirement might not align with the motivation of complementary-label learning.

The limitations above were removed in \cite{DBLP:conf/icml/IshidaNMS19}. An unbiased risk estimator with only complementarily labeled data was deduced by taking a different approach than \cite{DBLP:conf/nips/IshidaNHS17}. With the same assumption (\ref{E2}) adopted, the risk formulation is valid for arbitrary losses and models. Experiments on both linear and deep models showed the superiority of the estimator in \cite{DBLP:conf/icml/IshidaNMS19} than those in previous works \cite{DBLP:conf/nips/IshidaNHS17,DBLP:conf/eccv/YuLGT18}. Nevertheless, the estimator is still confined within single-complementary-label setting, where each sample is given merely one complementary label. The unlabeled data are also neglected in previous CLL studies, which prevents the CLL from being a more general framework.

\begin{figure*}[t]
    \centerline{\includegraphics[width = 20cm, height = 7.5cm,trim=10 140 160 80,clip]{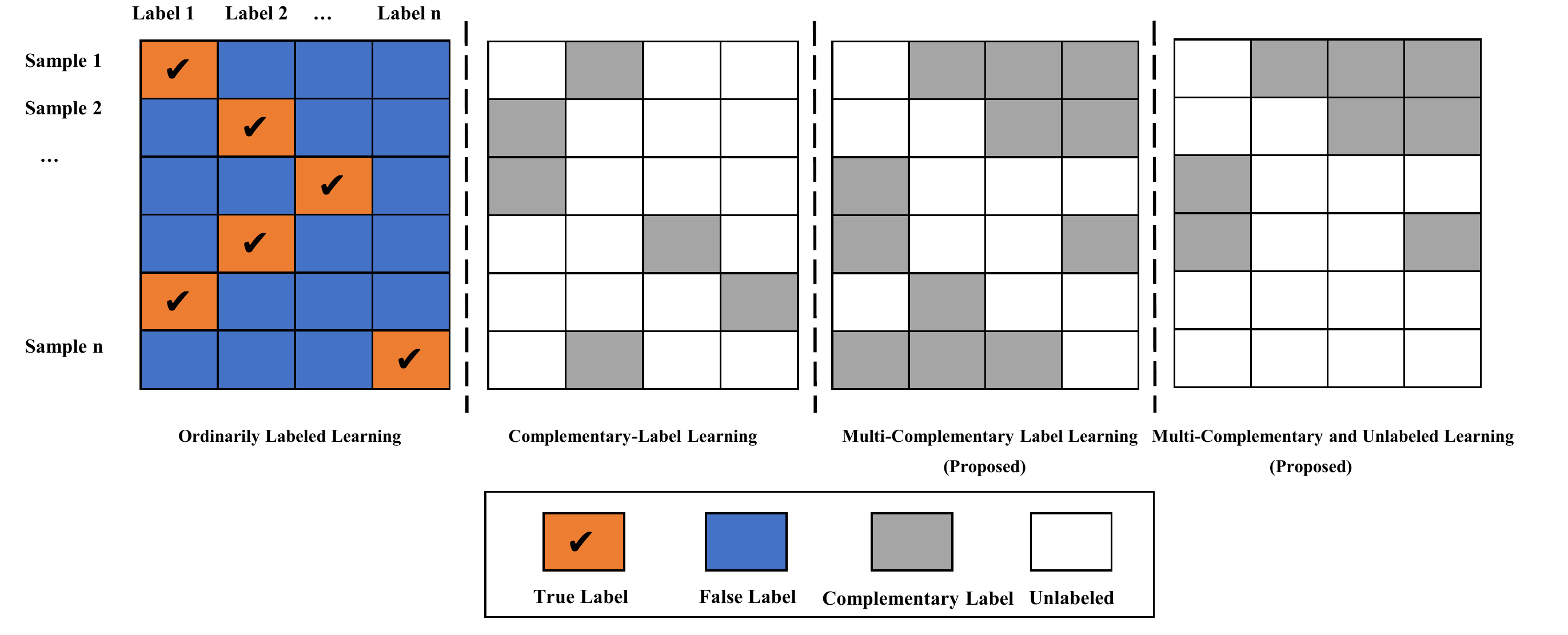}}
    \caption{The demonstration of previous CCL setting and the proposed MCL\&MCUL settings.}
    \label{Comp}
\end{figure*}

The MCUL framework proposed in this paper further enables learning from both multi-complementarily labeled samples and unlabeled samples. Figure \ref{Comp} describes the differences between the previous CLL setting and the proposed MCUL setting.
\section{Proposed Frameworks}
In this section, we propose our framework to enable unbiased estimation of classification risk from both multi-complementarily labeled data and unlabeled data.

We first prove that the classification risk can be recovered from multi-complementarily labeled data under a mild assumption by employing the risk rewrite technique \cite{UU}. Then we further present the risk formulation of MCUL and show the estimation error bounds of the two methods.

{\em \textbf{Notations and Settings:}} Denote by $\overline{\mathcal{Y}}\!=\!\{1,\ldots,K\}$ the complementary label space. $\mathcal{\overline{Y}}_{c}$ is the collection of all the possible combinations of $c$ different complementary labels, e.g. $\overline{Y}_{c}=\{1,\ldots,c\}\in\mathcal{\overline{Y}}_{c}$. $\overline{Y}_{c}$ is referred to as complementary-label set in the following sections. Suppose the training samples are sampled as follows:
\begin{align}
    &\mathcal{S}_{u}:=\{\bm{x}^{u}_{i}\}_{i=1}^{n_{u}}\mathop{\sim}\limits^{i.i.d.} p(\bm{x}),\nonumber\\
    &\mathcal{S}_{c}:=\{(\bm{x}^{c}_{i},\overline{Y}^{c}_{i})\}_{i=1}^{n_{c}}\mathop{\sim}\limits^{i.i.d.} \overline{p}_{c}(\bm{x},\overline{Y}),~~c=1,\ldots,K\!-\!1.\nonumber
\end{align}
where $p(\bm{x})$ is the marginal density and $\overline{p}_{c}(\bm{x},\overline{Y})$ is the density on $\mathcal{X}\times\overline{\mathcal{Y}}_{c}$. $\mathcal{S}_{u}$ are the unlabeled data and $\mathcal{S}_{c}$ are the multi-complementarily labeled data with complementary-label sets of size $c$. The size of complementary-label set $\overline{Y}$ is denoted by $|\overline{Y}|$ and $|\mathcal{S}_{c}|=n_{c},~c=1,\ldots,K-1$.
\subsection{Multi-Complementary Label Learning (MCL)}
In this section, we give an account of the risk minimization framework of multi-complementary label learning.

As in the previous works, we first make assumptions on the relation between density $\overline{p}_{c}(\bm{x},\overline{Y})$ and $p(\bm{x},y)$.
\begin{align}
\label{E3}
             \overline{p}_{c}(\bm{x},\overline{Y})=\dfrac{1}{\binom{K\!-1}{\bc}}\sum\limits_{y\notin\overline{Y}}p(\bm{x},y).
\end{align}

The assumption implies each combination of $c$ complementary labels are selected uniformly, which is a mild generalized assumption of those in the previous works \cite{DBLP:conf/nips/IshidaNHS17,DBLP:conf/eccv/YuLGT18,DBLP:conf/icml/IshidaNMS19}. Under this assumption, we prove that the \textit{multi-complementary loss} allows unbiased estimation of classification risk (\ref{E1}) from samples with complementary-label sets:
\begin{lemma}
\label{T1}Suppose the density $\overline{p}_{c}(\bm{x},\overline{Y})$ and $p(\bm{x},y)$ follow the assumption (\ref{E3}). For \textbf{any} loss function $\ell$ and decision function $\bm{g}$, the classification risk (\ref{E1}) is equal to the risk formulation below:
\begin{equation}
\label{E4}
R_{c}(\bm{g})=\mathbb{E}_{\overline{p}_{c}(\bm{x},\overline{Y})}[\overline{\ell}(\bm{g}(\bm{x}),\overline{Y})].
\end{equation}
where $\overline{\ell}$ is the multi-complementary loss:
\begin{equation}
\overline{\ell}(\bm{x},\overline{Y}):=\sum\limits_{y=1}^{K}\ell(\bm{g}(\bm{x}),y)-\dfrac{K\!-\!1}{|\overline{Y}|}\sum\limits_{y\in\overline{Y}}\ell(\bm{g}(\bm{x}),y).
\end{equation}
\end{lemma}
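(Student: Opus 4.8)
The plan is to start from the right-hand side of (\ref{E4}), expand the expectation over $\overline{p}_c(\bm{x},\overline{Y})$ using the assumption (\ref{E3}), and show that everything collapses to $R(\bm{g})$ as defined in (\ref{E1}). The key structural observation is that the multi-complementary loss $\overline{\ell}$ is an affine combination of the per-class losses $\ell(\bm{g}(\bm{x}),y)$, so the whole computation is a bookkeeping exercise in swapping the order of two finite sums (the sum over complementary-label sets $\overline{Y}\in\overline{\mathcal{Y}}_c$ and the sum over labels $y$) and counting multiplicities via binomial coefficients.

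Concretely, I would first write
\begin{equation*}
R_c(\bm{g})=\sum_{\overline{Y}\in\overline{\mathcal{Y}}_c}\int_{\mathcal{X}}\overline{p}_c(\bm{x},\overline{Y})\,\overline{\ell}(\bm{g}(\bm{x}),\overline{Y})\,d\bm{x},
\end{equation*}
substitute $\overline{p}_c(\bm{x},\overline{Y})=\binom{K-1}{c}^{-1}\sum_{y\notin\overline{Y}}p(\bm{x},y)$, and split $\overline{\ell}$ into its two pieces: the ``full sum'' term $\sum_{y=1}^K\ell(\bm{g}(\bm{x}),y)$ and the ``penalty'' term $\frac{K-1}{c}\sum_{y\in\overline{Y}}\ell(\bm{g}(\bm{x}),y)$. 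For the first piece, since $\sum_{y=1}^K\ell(\bm{g}(\bm{x}),y)$ does not depend on $\overline{Y}$, I pull it out and am left with $\binom{K-1}{c}^{-1}\sum_{\overline{Y}}\sum_{y\notin\overline{Y}}p(\bm{x},y)$; a double-counting argument shows each fixed label $y$ is excluded from exactly $\binom{K-1}{c}$ of the sets $\overline{Y}$ of size $c$, so this reduces to $\sum_{y=1}^K p(\bm{x},y)\cdot\sum_{y'=1}^K\ell(\bm{g}(\bm{x}),y')$ — wait, more carefully, it reduces to $\sum_{y=1}^K p(\bm{x},y)$ times $\sum_{y'=1}^K \ell(\bm{g}(\bm{x}),y')$. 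For the second (penalty) piece, the coefficient of a fixed $\ell(\bm{g}(\bm{x}),y_0)$ involves counting pairs $(\overline{Y},y)$ with $y_0\in\overline{Y}$ and $y\notin\overline{Y}$; this is the combinatorial heart of the argument and the one step I'd be most careful about.

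**The main obstacle** is getting that second counting argument exactly right. For a fixed $y_0$, I need $\sum_{\overline{Y}\ni y_0}\sum_{y\notin\overline{Y}} p(\bm{x},y)$. Splitting on whether $y=y_0$ is impossible (since $y_0\in\overline{Y}$ but $y\notin\overline{Y}$), so $y\neq y_0$ always. The number of size-$c$ sets $\overline{Y}$ containing $y_0$ but not a fixed $y\neq y_0$ is $\binom{K-2}{c-1}$. Multiplying through by $\frac{K-1}{c}\cdot\binom{K-1}{c}^{-1}$ and using $\frac{K-1}{c}\binom{K-2}{c-1}/\binom{K-1}{c} = \frac{K-1}{c}\cdot\frac{c}{K-1} = 1$, the penalty term contributes exactly $\sum_{y\neq y_0} p(\bm{x},y)\,\ell(\bm{g}(\bm{x}),y_0)$ for each $y_0$, i.e. in total $\sum_{y_0}\ell(\bm{g}(\bm{x}),y_0)\big(\sum_{y=1}^K p(\bm{x},y) - p(\bm{x},y_0)\big)$. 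Subtracting this from the first piece, the $\sum_y p(\bm{x},y)$ mass cancels and only $\sum_{y_0} p(\bm{x},y_0)\ell(\bm{g}(\bm{x}),y_0)$ survives; integrating over $\mathcal{X}$ gives $\mathbb{E}_{p(\bm{x},y)}[\ell(\bm{g}(\bm{x}),y)]=R(\bm{g})$, which is exactly (\ref{E1}). As a sanity check I would verify the special case $c=1$ recovers the estimator of \cite{DBLP:conf/icml/IshidaNMS19} and the identity $\binom{K-1}{1}=K-1$. Since the argument holds pointwise in $\bm{x}$ and makes no use of the form of $\ell$ or $\bm{g}$, the ``arbitrary loss and model'' claim is immediate.
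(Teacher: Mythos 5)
Your proof is correct: the two counting facts you rely on (a fixed label is excluded from exactly $\binom{K-1}{c}$ of the size-$c$ sets, and a size-$c$ set contains $y_0$ while excluding a fixed $y\neq y_0$ in exactly $\binom{K-2}{c-1}$ ways) are both right, the identity $\frac{K-1}{c}\binom{K-2}{c-1}/\binom{K-1}{c}=1$ checks out, and the final cancellation leaving $\sum_{y}p(\bm{x},y)\ell(\bm{g}(\bm{x}),y)$ pointwise in $\bm{x}$ is exactly what is needed. Your route is the mirror image of the paper's: you start from $R_c(\bm{g})$, substitute the assumption (\ref{E3}) for $\overline{p}_c(\bm{x},\overline{Y})$, and collapse the double sum by counting multiplicities, whereas the paper first inverts the corruption process to obtain the identity $p(\bm{x},y)=\sum_{\overline{Y}\in\overline{\mathcal{Y}}_{c}}\overline{p}_{c}(\bm{x},\overline{Y})-\frac{K-1}{c}\sum_{\overline{Y}\notin\overline{\mathcal{Y}}^{y}_{c}}\overline{p}_{c}(\bm{x},\overline{Y})$ and then substitutes this into $R(\bm{g})$ and swaps the order of summation. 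The two arguments use the same combinatorics and are logically equivalent; the paper's version has the advantage of making explicit \emph{why} the multi-complementary loss has its particular coefficients (it is literally the unbiased reconstruction of $p(\bm{x},y)$ from the $\overline{p}_c$'s, which is the template reused in the proof of Theorem \ref{CPI}), while your direct expansion is somewhat more self-contained and makes the ``pointwise in $\bm{x}$, hence arbitrary $\ell$ and $\bm{g}$'' conclusion immediate.
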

The proof can be found in the Appendix \ref{AA}. For ease of notation, we the following notation for cumulative loss $\sum\limits_{y=1}^{K}\ell(\bm{g}(\bm{x}),y)$:
\begin{align}
\mathcal{L}(\bm{g}(\bm{x})):=\sum\limits_{y=1}^{K}\ell(\bm{g}(\bm{x}),y).
\end{align}
Due to the notation, we can further rewrite the multi-complementary loss into the form below:
\begin{align}
\label{E5}
\overline{\ell}(\bm{x},\overline{Y}):=\mathcal{L}(\bm{g}(\bm{x}))-\dfrac{K\!-\!1}{|\overline{Y}|}\sum\limits_{y\in\overline{Y}}\ell(\bm{g}(\bm{x}),y).
\end{align}
Notice that the cumulative loss $\mathcal{L}(\bm{g}(\bm{x}))=\sum\limits_{y=1}^{K}\ell(\bm{g}(\bm{x}),y)$ is obtained by summing up the loss of the prediction $\bm{g}(\bm{x})$ \textit{w.r.t.} all the potential labels $y\in\{1,\ldots,K\}$, so it only relies on the sample $\bm{x}$ and the classifier $\bm{g}(\cdot)$. As a result, the label information used in the calculation of multi-complementary loss (\ref{E5}) is only complementary-label set $\overline{Y}$. Therefore, the multi-complementary label learning setting totally gets rid of the dependence on true labels.

The risk formulation in Lemma \ref{T1} shows that the classification risk can be recovered only from samples with complementary-label sets of fixed size $c$. However, the complementary-label sets of samples are \textit{not} necessarily limited to a certain size in reality. To completely remove the limitation on the size of complementary-label set, we consider the convex combination of $R_{c}(\bm{g})$ called \textit{multi-complementary risk}.
\begin{definition}\label{D1}(Multi-Complementary Risk)~For any decision function $\bm{g}$, its MCL risk is defined as:
\begin{equation}
\label{tot1}
    R_{\mbox{\tiny{\rm{MCL}}}}(\bm{g})=\sum_{c=1}^{K-1}\alpha_{c}R_{c}(\bm{g}),
\end{equation}
where $\bm{\alpha}$ is any vector in $\{\bm{\alpha}\big|\sum\limits_{c=1}^{K-1}\alpha_{c}=1,~\bm{\alpha}\succeq\bm{0}\}$.
\end{definition}
\begin{theorem}
The MCL risk is equal to classification risk (\ref{E1}):
\begin{equation}
    R_{\mbox{\tiny{\rm{MCL}}}}(\bm{g})=R(\bm{g}).
\end{equation}
\end{theorem}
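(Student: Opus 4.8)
The plan is to reduce the claim directly to Lemma \ref{T1}. By hypothesis, for every size $c\in\{1,\ldots,K-1\}$ the densities $\overline{p}_{c}(\bm{x},\overline{Y})$ and $p(\bm{x},y)$ satisfy the uniform-selection assumption (\ref{E3}), so Lemma \ref{T1} applies separately to each $c$ and yields
\begin{equation*}
R_{c}(\bm{g})=\mathbb{E}_{\overline{p}_{c}(\bm{x},\overline{Y})}[\overline{\ell}(\bm{g}(\bm{x}),\overline{Y})]=R(\bm{g}),\qquad c=1,\ldots,K-1,
\end{equation*}
where $R(\bm{g})$ is the ordinary classification risk (\ref{E1}). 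The key point is that the common value $R(\bm{g})$ does not depend on $c$.

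Next I would substitute these identities into the definition (\ref{tot1}) of the MCL risk and factor out $R(\bm{g})$:
\begin{equation*}
R_{\mbox{\tiny{\rm{MCL}}}}(\bm{g})=\sum_{c=1}^{K-1}\alpha_{c}R_{c}(\bm{g})=\sum_{c=1}^{K-1}\alpha_{c}R(\bm{g})=\left(\sum_{c=1}^{K-1}\alpha_{c}\right)R(\bm{g}).
\end{equation*}
Finally, invoking the constraint $\sum_{c=1}^{K-1}\alpha_{c}=1$ from Definition \ref{D1}, the bracketed sum equals $1$, so $R_{\mbox{\tiny{\rm{MCL}}}}(\bm{g})=R(\bm{g})$, which is the desired equality. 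Note that nonnegativity $\bm{\alpha}\succeq\bm{0}$ is not needed for the identity itself; only the normalization $\sum_c\alpha_c=1$ matters here (nonnegativity is relevant later for the variance/estimation-error analysis of the empirical version).

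There is essentially no hard step: the theorem is an immediate corollary of Lemma \ref{T1} together with the fact that $\bm{\alpha}$ lies on the probability simplex. The only thing worth stating carefully is that Lemma \ref{T1}'s hypothesis must hold for \emph{every} $c$ appearing in the convex combination — this is exactly guaranteed by the sampling model in the ``Notations and Settings'' paragraph, where each $\mathcal{S}_{c}$ is drawn from a density $\overline{p}_{c}$ satisfying (\ref{E3}). Hence the convex combination inherits the unbiasedness of each component, and the proof is complete.
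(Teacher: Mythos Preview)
Your proposal is correct and follows essentially the same approach as the paper's own proof: invoke Lemma~\ref{T1} to get $R_{c}(\bm{g})=R(\bm{g})$ for each $c$, substitute into the convex combination (\ref{tot1}), and use $\sum_{c}\alpha_{c}=1$. Your additional remark that only the normalization (not the nonnegativity) of $\bm{\alpha}$ is needed for the identity is a valid side observation but does not change the argument.
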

\begin{proof}
Due to Lemma \ref{T1}, we can get $R_{c}(\bm{g})=R(\bm{g})$. Then the following equations holds:$$R_{\mbox{\tiny{MCL}}}(\bm{g})=\sum\limits_{c=1}^{K-1}\alpha_{c}R_{c}(\bm{g})=\sum\limits_{c=1}^{K-1}\alpha_{c}R(\bm{g})=R(\bm{g}).$$
\end{proof}
The empirical MCL risk is as below:
\begin{align}
\label{unbiased}
\hat{R}_{\mbox{\tiny{MCL}}}(\bm{g})
=\sum\limits_{c=1}^{K-1}\dfrac{\alpha_{c}}{n_{c}}\sum\limits_{i=1}^{n_{c}}\left(\mathcal{L}(\bm{g}(\bm{x}_{i}^{c}))-\dfrac{K\!-\!1}{\bbc}\sum\limits_{y\in\overline{Y}_{i}^{c}}\ell(\bm{g}(\bm{x}_{i}^{c}),y)\right).
\end{align}
Then the following work is to find the minimizer $\hat{\bm{g}}_{\mbox{\tiny{MCL}}}$ of empirical MCL risk:
\begin{equation}
    \hat{\bm{g}}_{\mbox{\tiny{MCL}}}=\mathop{\mbox{arg~min}}\limits_{\bm{g}\in\mathcal{G}^{K}}\hat{R}_{\mbox{\tiny{MCL}}}(\bm{g}).
\end{equation}
where $\mathcal{G}=\{g(\bm{x})\}$ is a real function class and $\mathcal{G}^{K}=[\mathcal{G}_{i}]^{K}_{i=1}$ is a $K$-dimensional function class.

In (\ref{unbiased}), all the samples are taken into consideration regardless of the size of their complementary-label sets. Since there is no restriction on loss function $\ell$ and classifier $\bm{g}$, any loss and model is available for the multi-complementary learning framework.

\begin{remark}{\rm
There are some special cases in the multi-complementary label learning setting. If $\alpha_{1}=1$, the proposed estimator will reduce to the estimator in single-complementary-label setting \cite{DBLP:conf/icml/IshidaNMS19}. If $\alpha_{K-1}=1$, the proposed estimator will be the same with that in ordinary classification problem (\ref{ordinary}). According to the special cases, the proposed MCL proved to be a comprehensive weakly-supervised learning framework.}
\end{remark}
\subsection{Multi-Complementary and Unlabeled Learning (MCUL)}
To utilize both multi-complementarily labeled data and unlabeled data, we further rewrite the risk formulation and propose the MCUL framework. Based on Lemma \ref{T1}, we can incorporate the unlabeled data to construct an unbiased estimator of classification risk (\ref{E1}):
\begin{lemma}
\label{T2}
The classification risk (\ref{E1}) is equal to the risk formulation below:
\begin{align}
\label{Eu}
R_{c}^{u}(\bm{g})\!&=\!\mathbb{E}_{\overline{p}_{c}(\bm{x},\overline{Y})}\left[(1\!-\!\gamma)\mathcal{L}(\bm{g}(\bm{x}))\!-\!\dfrac{K\!-\!1}{c}\sum\limits_{y\in\overline{Y}}\ell(\bm{g}(\bm{x}),y)\right]\!+\!\gamma\mathbb{E}_{p(\bm{x})}\!\left[\mathcal{L}(\bm{g}(\bm{x}))\right]
\end{align}
where $\gamma\in[0,1]$ is the trade-off coefficient.
\end{lemma}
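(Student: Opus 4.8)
The plan is to build directly on Lemma~\ref{T1}, which already gives $R_c(\bm{g}) = R(\bm{g}) = \mathbb{E}_{\overline{p}_c(\bm{x},\overline{Y})}\bigl[\mathcal{L}(\bm{g}(\bm{x})) - \tfrac{K-1}{c}\sum_{y\in\overline{Y}}\ell(\bm{g}(\bm{x}),y)\bigr]$ once we specialize the multi-complementary loss~(\ref{E5}) to complementary-label sets of size $|\overline{Y}| = c$. The only extra ingredient required is that the marginal of $\overline{p}_c(\bm{x},\overline{Y})$ in $\bm{x}$ coincides with the ordinary marginal $p(\bm{x})$, so that a term depending on $\bm{x}$ alone may be freely transferred between the complementary expectation and an unlabeled expectation.

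First I would establish this marginal identity. Summing assumption~(\ref{E3}) over all $\overline{Y}\in\overline{\mathcal{Y}}_c$ and exchanging the order of summation, each fixed $y\in\{1,\ldots,K\}$ is excluded by exactly $\binom{K-1}{c}$ of the size-$c$ subsets, so $\sum_{\overline{Y}}\overline{p}_c(\bm{x},\overline{Y}) = \tfrac{1}{\binom{K-1}{c}}\binom{K-1}{c}\sum_{y=1}^{K} p(\bm{x},y) = p(\bm{x})$. Consequently, for any function of $\bm{x}$ alone---in particular the cumulative loss $\mathcal{L}(\bm{g}(\bm{x}))$---we have $\mathbb{E}_{\overline{p}_c(\bm{x},\overline{Y})}[\mathcal{L}(\bm{g}(\bm{x}))] = \mathbb{E}_{p(\bm{x})}[\mathcal{L}(\bm{g}(\bm{x}))]$.

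Next I would split the cumulative-loss term in the expression for $R_c(\bm{g})$ as $\mathcal{L}(\bm{g}(\bm{x})) = (1-\gamma)\mathcal{L}(\bm{g}(\bm{x})) + \gamma\,\mathcal{L}(\bm{g}(\bm{x}))$ for $\gamma\in[0,1]$, keep the $(1-\gamma)$ piece together with $-\tfrac{K-1}{c}\sum_{y\in\overline{Y}}\ell(\bm{g}(\bm{x}),y)$ inside $\mathbb{E}_{\overline{p}_c}$, and push the $\gamma$ piece through the marginal identity of the previous step to rewrite it as $\gamma\,\mathbb{E}_{p(\bm{x})}[\mathcal{L}(\bm{g}(\bm{x}))]$. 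Collecting the two pieces yields exactly $R_c^{u}(\bm{g})$ as displayed in~(\ref{Eu}), and since Lemma~\ref{T1} gives $R_c(\bm{g}) = R(\bm{g})$, the claim $R_c^{u}(\bm{g}) = R(\bm{g})$ follows.

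The argument is essentially bookkeeping once Lemma~\ref{T1} is available; the only step that needs a genuine (though elementary) computation is the combinatorial counting that identifies the $\bm{x}$-marginal of $\overline{p}_c$ with $p(\bm{x})$, which I expect to be the main point to get right. No additional restriction on $\ell$ or $\bm{g}$ enters the decomposition, so the identity remains valid for arbitrary losses and models, exactly as stated.
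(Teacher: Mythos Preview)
Your proposal is correct and follows essentially the same approach as the paper: both proofs rely on Lemma~\ref{T1} together with the observation that $\mathbb{E}_{\overline{p}_c(\bm{x},\overline{Y})}[\mathcal{L}(\bm{g}(\bm{x}))] = \mathbb{E}_{p(\bm{x})}[\mathcal{L}(\bm{g}(\bm{x}))]$, then split the cumulative-loss term into $(1-\gamma)$ and $\gamma$ pieces. The only difference is that you explicitly verify the $\bm{x}$-marginal of $\overline{p}_c$ equals $p(\bm{x})$ via the combinatorial count, whereas the paper simply asserts this from the label-independence of $\mathcal{L}(\bm{g}(\bm{x}))$; your extra care is justified but not a different route.
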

\begin{proof}
The cumulative loss $\mathcal{L}(\bm{g}(\bm{x}))$ is independent of $\overline{Y}$, and thus:
\begin{align}
\mathbb{E}_{p(\bm{x})}\left[\mathcal{L}(\bm{g}(\bm{x}))\right]=\mathbb{E}_{\overline{p}_{c}(\bm{x},\overline{Y})}\left[\mathcal{L}(\bm{g}(\bm{x}))\right].\nonumber
\end{align}
According to the equation above and Lemma \ref{T1}, we can obtain:$$R_{c}^{u}(\bm{g})=R_{c}(\bm{g})=R(\bm{g}).$$
\end{proof}
In the same manner as in the derivation of (\ref{tot1}), we can derive the \textit{multi-complementary and unlabeled risk}:
\begin{definition}(Multi-Complementary\&Unlabeled Risk)~For any decision function $\bm{g}$, its MCUL risk is defined as:
\label{D2}
\begin{align}
\label{MCUL}
    R_{\mbox{\tiny{{\rm MCUL}}}}(\bm{g})&=\sum\limits_{c=1}^{K-1}\alpha_{c}R_{c}^{u}(\bm{g}).
\end{align}
where $\bm{\alpha}$ is any vector in $\{\bm{\alpha}\big|\sum\limits_{c=1}^{K-1}\alpha_{c}=1,~\bm{\alpha}\succeq\bm{0}\}$.
\end{definition}
When the trade-off coefficient $\gamma$ is set to 0, the MCUL risk is the same with MCL risk (\ref{tot1}). The following Theorem allows unbiased estimation with both unlabeled data and multi-complementarily labeled data.
\begin{theorem}
The MCUL risk is equal to classification risk (\ref{E1}):
    \begin{equation}
        R_{\mbox{\tiny{{\rm MCUL}}}}(\bm{g})=R(\bm{g}).
    \end{equation}
\end{theorem}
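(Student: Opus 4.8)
The plan is to reduce the statement directly to Lemma \ref{T2} in exactly the same way the earlier theorem reduced $R_{\mbox{\tiny{\rm{MCL}}}}$ to Lemma \ref{T1}. First I would invoke Lemma \ref{T2}, which asserts that for every $c \in \{1,\ldots,K-1\}$ and every loss $\ell$, decision function $\bm{g}$, and trade-off coefficient $\gamma \in [0,1]$, the per-size risk satisfies $R_{c}^{u}(\bm{g}) = R(\bm{g})$. This is the substantive content; the rest is bookkeeping with the mixing weights.

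Next I would substitute this identity into Definition \ref{D2}. Writing out the convex combination,
\begin{align}
R_{\mbox{\tiny{{\rm MCUL}}}}(\bm{g}) = \sum_{c=1}^{K-1}\alpha_{c}R_{c}^{u}(\bm{g}) = \sum_{c=1}^{K-1}\alpha_{c}R(\bm{g}) = \left(\sum_{c=1}^{K-1}\alpha_{c}\right)R(\bm{g}),\nonumber
\end{align}
where the middle equality is Lemma \ref{T2} applied term by term and the last equality factors out $R(\bm{g})$ since it does not depend on $c$. Finally, I would use the defining constraint on $\bm{\alpha}$, namely $\sum_{c=1}^{K-1}\alpha_{c}=1$ with $\bm{\alpha}\succeq\bm{0}$, to conclude $R_{\mbox{\tiny{{\rm MCUL}}}}(\bm{g}) = R(\bm{g})$.

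There is essentially no obstacle here: all the real work — the risk-rewrite manipulation exploiting that $\mathcal{L}(\bm{g}(\bm{x}))$ is independent of $\overline{Y}$, so that $\mathbb{E}_{p(\bm{x})}[\mathcal{L}(\bm{g}(\bm{x}))] = \mathbb{E}_{\overline{p}_{c}(\bm{x},\overline{Y})}[\mathcal{L}(\bm{g}(\bm{x}))]$, combined with Lemma \ref{T1} — has already been carried out in the proof of Lemma \ref{T2}. The only point worth stating explicitly is that the equality holds \emph{simultaneously} for all admissible $\bm{\alpha}$ and all $\gamma \in [0,1]$, since neither the convexity of the weights nor the value of $\gamma$ affects the identity $R_{c}^{u}(\bm{g}) = R(\bm{g})$; this is what makes the framework genuinely cover the MCL case ($\gamma = 0$), the single-complementary-label case ($\alpha_{1}=1$), and ordinary classification ($\alpha_{K-1}=1$) as special instances.
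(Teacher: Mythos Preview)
Your proposal is correct and follows exactly the approach the paper indicates: the paper simply states that the theorem ``can be proven in the same way'' as the MCL theorem, i.e., invoke Lemma~\ref{T2} to get $R_{c}^{u}(\bm{g})=R(\bm{g})$ for each $c$ and then use $\sum_{c}\alpha_{c}=1$. There is nothing to add.
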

The Theorem can be proven in the same way as in Theorem \ref{tot1}.
We can approximate the MCUL risk by the empirical MCUL risk below:
\begin{align}
\label{EMMCUL}
    \hat{R}_{\mbox{\tiny{MCUL}}}(\bm{g})=&\dfrac{\gamma}{n_{u}}\sum\limits_{i=1}^{n_{u}}\mathcal{L}(\bm{g}(\bm{x}^{u}_{i}))+\sum\limits_{c=1}^{K-1}\dfrac{\alpha_{c}(1-\gamma)}{n_{c}}\mathcal{L}(\bm{g}(\bm{x}_{i}^{c}))\nonumber\\&-\sum\limits_{c=1}^{K-1}\dfrac{\alpha_{c}(K\!-\!1)}{c\!\cdot\!n_{c}}\sum\limits_{i=1}^{n_{c}}\sum\limits_{y\in\overline{Y}_{i}^{c}}\ell(\bm{g}(\bm{x}_{i}^{c}),y).
\end{align}
Notice that the unlabeled data are used for construct the estimator of cumulative loss since the calculation of $\mathcal{L}(\bm{g}(\bm{x}))$ does not need any label information. Theorem \ref{T2} shows that this incorporation can still yields an unbiased estimator of classification risk (\ref{E1}).

Then the following work is to find the minimizer $\hat{\bm{g}}_{\mbox{\tiny{MCUL}}}$ of empirical MCUL risk:
\begin{equation}
    \hat{\bm{g}}_{\mbox{\tiny{MCUL}}}=\mathop{\mbox{arg~min}}\limits_{\bm{g}\in\mathcal{G}^{K}}\hat{R}_{\mbox{\tiny{MCUL}}}(\bm{g}).
\end{equation}

Compared with the empirical MCL risk (\ref{unbiased}), the empirical MCUL risk (\ref{MCUL}) further incorporates the unlabeled data into the risk formulation. With the incorporation of easily accessible unlabeled data, the estimation error bound will be tighter, which indicates a better decision function $\bm{g}$. The claim is further validated in the following sections.
\section{Estimation Error Bounds of MCL and MCUL}
In this section, we give the estimation error bounds of the proposed MCL and MCUL frameworks.

Suppose the non-negative loss function $\ell$ does not exceed $C_{\ell}$ on feature space $\mathcal{X}$ and let $L_{\ell}$ be the Lipschitz constant of $\ell$. $\mathfrak{R}_{n}(\mathcal{G})$ is the Rademacher complexity \cite{foundation} of function class $\mathcal{G}$ with sample size of $n$ from $p(\bm{x})$ and we suppose it decays in the rate of $\mathcal{O}(1/\sqrt{n})$. We have the following estimation error bounds, which show the convergence of $\hat{\bm{g}}_{\mbox{\tiny{MCL}}}$ and $\hat{\bm{g}}_{\mbox{\tiny{MCUL}}}$ to the optimal decision function $\bm{g}^{*}=\mathop{\mbox{arg~min}}\limits_{\bm{g}\in\mathcal{G}^{K}}R(\bm{g})$:
\begin{theorem}(Estimation error bound of MCL) For any $\delta>0$, with probability at least $1-\delta$:
\label{MCL_bound}
\begin{align}
R(\hat{\bm{g}}_{\mbox{\tiny{MCL}}})\!-\!R(\bm{g}^{*})&\leq\sum\limits_{c=1}^{K-1}2K\left(\frac{K-1}{\bc}+1\right)\alpha_{c}\nonumber\\&\times\left(2KL_{\ell}\mathfrak{R}_{n_{c}}(\mathcal{G})+C_{\ell}\sqrt{\dfrac{\ln(2K/\delta)}{2n_{c}}}\right).
\end{align}
\end{theorem}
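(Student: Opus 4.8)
The plan is the standard uniform-deviation argument for empirical risk minimizers, exploiting the unbiasedness $R=R_{\mbox{\tiny{MCL}}}$ established in Theorem~2. First I would write
\[
R(\hat{\bm{g}}_{\mbox{\tiny{MCL}}})-R(\bm{g}^{*})
=\big(R_{\mbox{\tiny{MCL}}}(\hat{\bm{g}}_{\mbox{\tiny{MCL}}})-\hat{R}_{\mbox{\tiny{MCL}}}(\hat{\bm{g}}_{\mbox{\tiny{MCL}}})\big)
+\big(\hat{R}_{\mbox{\tiny{MCL}}}(\hat{\bm{g}}_{\mbox{\tiny{MCL}}})-\hat{R}_{\mbox{\tiny{MCL}}}(\bm{g}^{*})\big)
+\big(\hat{R}_{\mbox{\tiny{MCL}}}(\bm{g}^{*})-R(\bm{g}^{*})\big),
\]
where the middle bracket is nonpositive because $\hat{\bm{g}}_{\mbox{\tiny{MCL}}}$ minimizes $\hat{R}_{\mbox{\tiny{MCL}}}$ over $\mathcal{G}^{K}$; hence the left-hand side is at most $2\sup_{\bm{g}\in\mathcal{G}^{K}}|\hat{R}_{\mbox{\tiny{MCL}}}(\bm{g})-R_{\mbox{\tiny{MCL}}}(\bm{g})|$. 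Since $\hat{R}_{\mbox{\tiny{MCL}}}=\sum_{c}\alpha_{c}\hat{R}_{c}$, $R_{\mbox{\tiny{MCL}}}=\sum_{c}\alpha_{c}R_{c}$ and $\bm{\alpha}\succeq\bm{0}$ with $\sum_{c}\alpha_{c}=1$, the triangle inequality bounds this by $2\sum_{c=1}^{K-1}\alpha_{c}\sup_{\bm{g}}|\hat{R}_{c}(\bm{g})-R_{c}(\bm{g})|$, so it suffices to control each term $\sup_{\bm{g}}|\hat{R}_{c}(\bm{g})-R_{c}(\bm{g})|$, where $\hat{R}_{c}(\bm{g})=\frac{1}{n_{c}}\sum_{i=1}^{n_{c}}\overline{\ell}(\bm{g}(\bm{x}_{i}^{c}),\overline{Y}_{i}^{c})$ and, by i.i.d.\ sampling from $\overline{p}_{c}$, $\mathbb{E}[\hat{R}_{c}(\bm{g})]=R_{c}(\bm{g})$.

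For a fixed $c$, I would apply McDiarmid's bounded-difference inequality to $\Phi(\mathcal{S}_{c}):=\sup_{\bm{g}}|\hat{R}_{c}(\bm{g})-R_{c}(\bm{g})|$. Replacing one example of $\mathcal{S}_{c}$ perturbs $\hat{R}_{c}$ by at most $n_{c}^{-1}$ times the oscillation of $\overline{\ell}$; using $0\le\mathcal{L}(\bm{g}(\bm{x}))\le KC_{\ell}$ and, crudely, $\sum_{y\in\overline{Y}}\ell(\bm{g}(\bm{x}),y)\le\mathcal{L}(\bm{g}(\bm{x}))\le KC_{\ell}$, the oscillation of $\overline{\ell}=\mathcal{L}(\bm{g}(\bm{x}))-\frac{K-1}{|\overline{Y}|}\sum_{y\in\overline{Y}}\ell(\bm{g}(\bm{x}),y)$ is at most $K\big(\frac{K-1}{c}+1\big)C_{\ell}$. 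McDiarmid then gives, with probability at least $1-\delta'$, $\Phi(\mathcal{S}_{c})\le\mathbb{E}\,\Phi(\mathcal{S}_{c})+K\big(\frac{K-1}{c}+1\big)C_{\ell}\sqrt{\ln(1/\delta')/(2n_{c})}$, which is the origin of the $C_{\ell}\sqrt{\ln(2K/\delta)/(2n_{c})}$ summand (the $2K$ inside the logarithm comes from a union bound distributing $\delta$ over the $K-1$ sets $\mathcal{S}_{c}$ and the two signs of the supremum).

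Next I would bound $\mathbb{E}\,\Phi(\mathcal{S}_{c})$ by the usual symmetrization step, so that it is at most twice the Rademacher complexity of the loss class $\{(\bm{x},\overline{Y})\mapsto\overline{\ell}(\bm{g}(\bm{x}),\overline{Y}):\bm{g}\in\mathcal{G}^{K}\}$ on $n_{c}$ points. Writing $\overline{\ell}(\bm{g}(\bm{x}),\overline{Y})=\sum_{y=1}^{K}\ell(\bm{g}(\bm{x}),y)-\frac{K-1}{|\overline{Y}|}\sum_{y\in\overline{Y}}\ell(\bm{g}(\bm{x}),y)$, subadditivity of the Rademacher complexity and the symmetry of the Rademacher signs (which lets me drop the minus sign and absorb the weights $\mathbf{1}[y\in\overline{Y}_{i}]$) reduce this to $\big(\frac{K-1}{c}+1\big)\sum_{y=1}^{K}\mathfrak{R}_{n_{c}}(\ell(\bm{g}(\cdot),y)\circ\mathcal{G}^{K})$; the contraction property of the Rademacher complexity together with the $L_{\ell}$-Lipschitz property of $\ell$ and subadditivity over the $K$ coordinate classes then bounds each summand by $\mathcal{O}(KL_{\ell}\mathfrak{R}_{n_{c}}(\mathcal{G}))$, producing the $2KL_{\ell}\mathfrak{R}_{n_{c}}(\mathcal{G})$ term with the same $K\big(\frac{K-1}{c}+1\big)$ prefactor. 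Combining the two estimates, summing over $c$ with weights $2\alpha_{c}$, and invoking $\mathfrak{R}_{n_{c}}(\mathcal{G})=\mathcal{O}(1/\sqrt{n_{c}})$ yields the stated bound and the $\mathcal{O}(1/\sqrt{n_{c}})$ parametric rate for each piece.

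The McDiarmid/symmetrization part is routine; the main obstacle is the constant bookkeeping in the last step --- pushing the contraction inequality through a multiclass loss whose argument is the full vector $\bm{g}(\bm{x})\in\mathbb{R}^{K}$ (so that each $\mathfrak{R}_{n_{c}}(\ell(\bm{g}(\cdot),y)\circ\mathcal{G}^{K})$ is controlled by the sum of the coordinatewise complexities of $\mathcal{G}$) and then summing these contributions over the $K$ labels appearing in $\mathcal{L}$ and over the $|\overline{Y}|=c$ labels of the weighted complementary term so that everything collapses into the single factor $\frac{K-1}{c}+1$. A secondary subtlety is that the sets $\mathcal{S}_{c}$ have different sizes $n_{c}$, so the allocation of $\delta$ and the final summation over $c$ must be organized so that each $c$-term carries its own $n_{c}$ and its own $\ln(2K/\delta)$.
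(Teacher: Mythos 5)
Your proposal is correct and takes essentially the same route as the paper, which proves the MCUL bound and obtains this theorem as its $\gamma=0$ special case: the same ERM decomposition into $2\sup_{\bm{g}}|R_{\mbox{\tiny{MCL}}}(\bm{g})-\hat{R}_{\mbox{\tiny{MCL}}}(\bm{g})|$, sub-additivity over $c$, McDiarmid plus symmetrization for each $\mathcal{S}_{c}$, and Talagrand's contraction lemma (applied to the shifted loss $\ell-\ell(0)$) yielding the $2KL_{\ell}\mathfrak{R}_{n_{c}}(\mathcal{G})$ term. The one step you describe loosely --- ``absorbing'' the weights $\mathbf{1}[y\in\overline{Y}_{i}]$ into the Rademacher signs --- is carried out in the paper by writing $\mathbf{1}[y\in\overline{Y}_{i}]=(\alpha_{i}+1)/2$ with $\alpha_{i}=2I(y\in\overline{Y}_{i})-1\in\{-1,+1\}$, splitting the supremum, and using that $\alpha_{i}\sigma_{i}$ is again Rademacher; this matches your constants and your union bound over $c$ gives the same $\ln(2K/\delta)$.
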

\begin{theorem}(Estimation error bound of MCUL) For any $\delta>0$, with probability at least $1-\delta$:
\label{MCUL_bound}
\begin{align}
R(\hat{\bm{g}}_{\mbox{\tiny{MCUL}}})\!-\!R(\bm{g}^{*})=\mathcal{O}(1/\sqrt{n_{u}}+\sum\limits_{c=1}^{K-1}1/\sqrt{n_{c}}).
\end{align}
\end{theorem}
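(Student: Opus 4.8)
The plan is to run the standard ERM estimation-error argument, exploiting the block structure of $\hat{R}_{\mbox{\tiny{MCUL}}}$. Because $R_{\mbox{\tiny{MCUL}}}(\bm{g})=R(\bm{g})$ for every $\bm{g}\in\mathcal{G}^{K}$ and $\hat{\bm{g}}_{\mbox{\tiny{MCUL}}}$ minimizes $\hat{R}_{\mbox{\tiny{MCUL}}}$, adding and subtracting $\hat{R}_{\mbox{\tiny{MCUL}}}$ evaluated at $\hat{\bm{g}}_{\mbox{\tiny{MCUL}}}$ and at $\bm{g}^{*}$ and discarding the nonpositive optimality gap gives
\[
R(\hat{\bm{g}}_{\mbox{\tiny{MCUL}}})-R(\bm{g}^{*})\leq 2\sup_{\bm{g}\in\mathcal{G}^{K}}\bigl|\hat{R}_{\mbox{\tiny{MCUL}}}(\bm{g})-R_{\mbox{\tiny{MCUL}}}(\bm{g})\bigr|,
\]
so the theorem reduces to a uniform-deviation estimate. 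I would split the right-hand side by the triangle inequality into $K$ pieces: one for the unlabeled sample $\mathcal{S}_{u}$, carrying $\tfrac{\gamma}{n_{u}}\sum_{i}\mathcal{L}(\bm{g}(\bm{x}^{u}_{i}))$ against its mean $\gamma\,\mathbb{E}_{p(\bm{x})}[\mathcal{L}(\bm{g}(\bm{x}))]$, and one for each $\mathcal{S}_{c}$, carrying $\tfrac{\alpha_{c}(1-\gamma)}{n_{c}}\sum_{i}\mathcal{L}(\bm{g}(\bm{x}^{c}_{i}))-\tfrac{\alpha_{c}(K-1)}{c\,n_{c}}\sum_{i}\sum_{y\in\overline{Y}^{c}_{i}}\ell(\bm{g}(\bm{x}^{c}_{i}),y)$ against its mean. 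Since $\mathcal{S}_{u},\mathcal{S}_{1},\ldots,\mathcal{S}_{K-1}$ are mutually independent, each piece is handled on its own and the confidence budget $\delta$ is divided evenly over the $K$ pieces by a union bound.

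For each piece I would apply the classical three-step machinery. First, bounded differences: with $0\le\ell\le C_{\ell}$ the cumulative loss obeys $0\le\mathcal{L}(\bm{g}(\bm{x}))\le KC_{\ell}$, and since $\gamma,\alpha_{c},(K-1)/c$ are all $\mathcal{O}(1)$, replacing a single sample in a piece perturbs that piece's empirical average by $\mathcal{O}(KC_{\ell}/n_{\bullet})$, so McDiarmid's inequality bounds the piece's supremum by its expectation plus a term of order $\sqrt{\ln(K/\delta)/n_{\bullet}}$ (up to the factors $K$ and $C_{\ell}$). Second, symmetrization replaces that expectation by the Rademacher complexity of the corresponding loss-composed class. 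Third, Talagrand's contraction lemma strips off the $L_{\ell}$-Lipschitz loss, and sub-additivity of Rademacher complexity across the $K$ coordinates of $\mathcal{G}^{K}$ turns $\mathcal{L}(\bm{g}(\cdot))$ and $\sum_{y\in\overline{Y}}\ell(\bm{g}(\cdot),y)$ into at most $K$ copies of $\mathfrak{R}_{n_{\bullet}}(\mathcal{G})$, leaving a term of order $K^{2}L_{\ell}\mathfrak{R}_{n_{\bullet}}(\mathcal{G})$. This is exactly the estimate already used for Theorem~\ref{MCL_bound}; the one new ingredient, the unlabeled piece, is handled identically and is in fact simpler because the cumulative loss uses no label information.

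Summing the $K$ piece-wise bounds and inserting the assumed decay $\mathfrak{R}_{n_{\bullet}}(\mathcal{G})=\mathcal{O}(1/\sqrt{n_{\bullet}})$ yields
\[
R(\hat{\bm{g}}_{\mbox{\tiny{MCUL}}})-R(\bm{g}^{*})=\mathcal{O}\!\Bigl(1/\sqrt{n_{u}}+\sum_{c=1}^{K-1}1/\sqrt{n_{c}}\Bigr),
\]
since $K,L_{\ell},C_{\ell},\gamma$ and $\bm{\alpha}$ are constants absorbed into $\mathcal{O}(\cdot)$. The main obstacle is bookkeeping rather than anything conceptual: one must verify the bounded-difference constants for the three loss aggregates simultaneously, track how $\gamma$ and $\bm{\alpha}$ propagate through every coefficient, and justify the sub-additivity/contraction steps on the vector-valued class $\mathcal{G}^{K}$ carefully enough that the claimed $K$-dependence is honest. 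As a sanity check, at $\gamma=0$ the unlabeled piece vanishes and the bound collapses to Theorem~\ref{MCL_bound}, while $\gamma\in[0,1]$ keeps the unlabeled weight bounded, so no degeneracy arises at the endpoints.
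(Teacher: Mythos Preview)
Your proposal is correct and follows essentially the same route as the paper: reduce to a uniform-deviation bound via the standard ERM inequality, split $\hat{R}_{\mbox{\tiny{MCUL}}}-R_{\mbox{\tiny{MCUL}}}$ into the unlabeled piece and the $K-1$ complementary pieces, and control each piece by McDiarmid plus symmetrization plus Talagrand contraction, combining via a union bound. The paper carries out exactly these steps (using a shifted loss $\tilde{\ell}=\ell-\ell(0)$ to meet the hypothesis of the contraction lemma and a small indicator-function trick to bound the Rademacher complexity of the $\overline{Y}$-dependent sum), so the only differences are at the level of bookkeeping details you already flagged.
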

The proof of the theorems above can be found in the Appendix \ref{AB}.
\begin{remark}{\rm From Theorems \ref{MCL_bound} and \ref{MCUL_bound}, we can learn the estimation error bounds of the proposed methods are in the optimal convergence rate without any additional assumption \cite{vapnik}. Moreover, with increasing number of unlabeled data, the error bound of MCUL will get tighter, which implies the helpfulness of utilizing unlabeled data.}\end{remark}
\section{Integration of Class-Prior Information}
In the previous sections, a sample with $c$ complementary labels is considered to be sampled from the distribution with density $\overline{p}_{c}(\bm{x},\overline{Y})$, which is independent from the class-prior probability $p(|\overline{Y}|=c)$. In practical situations, however, the class-prior may be accessible. For example, in \cite{CR}, the class-prior can be estimated from the given data; the prior is simply approximated by the relative frequency in \cite{DBLP:conf/icml/IshidaNMS19}. \cite{PUPN} proves the helpfulness of integrating the class-prior into learning algorithm. As can be seen, it is promising to further enhance the capability of proposed MCL and MCUL framework by utilizing the class-prior information.

Notice that ${\overline{p}_{c}(\bm{x},\overline{Y})}$ is the conditional density $\overline{p}\left(\bm{x},\overline{Y}\left||\overline{Y}|=c\right.\right)$ in essence. Then the following equation holds:
\begin{align}
\label{cond}
\overline{p}(\bm{x},\overline{Y})=\overline{p}_{c}(\bm{x},\overline{Y})\pi_{c}.
\end{align}
where $\pi_{c}=p(|\overline{Y}|=c)$ and $|\overline{Y}|=c$.
Due to the equation (\ref{cond}), we can integrate the class-prior information into the risk formulations of MCL and MCUL as follows:
\begin{theorem}(MCL$^{cl}$ risk and MCUL$^{cl}$ risk)
\label{CPI}
\begin{align}
\label{tot2}&R(\bm{g})=\mathbb{E}_{\overline{p}(\bm{x},\overline{Y})}\left[\mathcal{L}(\bm{g}(\bm{x}))-\lambda\sum\limits_{y\in\overline{Y}}\ell(\bm{g}(\bm{x}),y)\right]\\
\label{tot3}&=\mathbb{E}_{\overline{p}(\bm{x},\overline{Y})}\left[(1-\gamma)\mathcal{L}(\bm{g}(\bm{x}))-\lambda\sum\limits_{y\in\overline{Y}}\ell(\bm{g}(\bm{x}),y)\right]+\gamma\mathbb{E}_{\overline{p}(\bm{x})}\left[\mathcal{L}(\bm{g}(\bm{x}))\right]
\end{align}
where $\lambda=({K-1})/{\sum\limits_{c=1}^{K-1}c\pi_{c}}$ and $\gamma\in[0,1]$ is a trade-off parameter. (\ref{tot2}) and (\ref{tot3}) are called MCL$^{cl}$ risk and MCUL$^{cl}$ risk respectively.
\end{theorem}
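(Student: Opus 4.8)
The plan is to derive both identities from Lemma~\ref{T1} using only the mixture decomposition~(\ref{cond}). First I would record the elementary fact that taking an expectation against the full joint $\overline{p}(\bm{x},\overline{Y})$ splits according to the size of the complementary-label set: since $\overline{p}(\bm{x},\overline{Y})=\overline{p}_{c}(\bm{x},\overline{Y})\pi_{c}$ with $c=|\overline{Y}|$, for any integrable $f$ we have
\[
\mathbb{E}_{\overline{p}(\bm{x},\overline{Y})}[f(\bm{x},\overline{Y})]=\sum_{c=1}^{K-1}\pi_{c}\,\mathbb{E}_{\overline{p}_{c}(\bm{x},\overline{Y})}[f(\bm{x},\overline{Y})].
\]
In parallel I would verify that the $\bm{x}$-marginal of every $\overline{p}_{c}$ is exactly $p(\bm{x})$: summing assumption~(\ref{E3}) over all size-$c$ sets and using that each label $y$ is missing from precisely $\binom{K-1}{c}$ of them gives $\sum_{|\overline{Y}|=c}\overline{p}_{c}(\bm{x},\overline{Y})=p(\bm{x})$, hence $\overline{p}(\bm{x})=p(\bm{x})$; this lets us read the term $\mathbb{E}_{\overline{p}(\bm{x})}[\mathcal{L}(\bm{g}(\bm{x}))]$ in~(\ref{tot3}) as the unlabeled-data expectation $\mathbb{E}_{p(\bm{x})}[\mathcal{L}(\bm{g}(\bm{x}))]$, and also guarantees $\sum_{c}c\pi_{c}>0$ so that $\lambda$ is well defined.

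Next comes the core computation. By Lemma~\ref{T1}, $R(\bm{g})=\mathbb{E}_{\overline{p}_{c}(\bm{x},\overline{Y})}[\mathcal{L}(\bm{g}(\bm{x}))]-\frac{K-1}{c}\mathbb{E}_{\overline{p}_{c}(\bm{x},\overline{Y})}[\sum_{y\in\overline{Y}}\ell(\bm{g}(\bm{x}),y)]$, and because the cumulative loss does not depend on $\overline{Y}$ the first term equals $\mathbb{E}_{p(\bm{x})}[\mathcal{L}(\bm{g}(\bm{x}))]$ for every $c$. Solving for the partial-loss expectation yields
\[
\mathbb{E}_{\overline{p}_{c}(\bm{x},\overline{Y})}\Big[\sum_{y\in\overline{Y}}\ell(\bm{g}(\bm{x}),y)\Big]=\frac{c}{K-1}\big(\mathbb{E}_{p(\bm{x})}[\mathcal{L}(\bm{g}(\bm{x}))]-R(\bm{g})\big).
\]
Weighting by $\pi_{c}$, summing over $c$, and invoking the split above gives $\mathbb{E}_{\overline{p}(\bm{x},\overline{Y})}[\sum_{y\in\overline{Y}}\ell(\bm{g}(\bm{x}),y)]=\frac{\sum_{c}c\pi_{c}}{K-1}\big(\mathbb{E}_{p(\bm{x})}[\mathcal{L}(\bm{g}(\bm{x}))]-R(\bm{g})\big)$. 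Multiplying through by $\lambda=(K-1)/\sum_{c}c\pi_{c}$ cancels the prefactor exactly, so $\lambda\,\mathbb{E}_{\overline{p}(\bm{x},\overline{Y})}[\sum_{y\in\overline{Y}}\ell(\bm{g}(\bm{x}),y)]=\mathbb{E}_{p(\bm{x})}[\mathcal{L}(\bm{g}(\bm{x}))]-R(\bm{g})$. Substituting this into the right-hand side of~(\ref{tot2}), whose leading term is $\mathbb{E}_{\overline{p}(\bm{x},\overline{Y})}[\mathcal{L}(\bm{g}(\bm{x}))]=\mathbb{E}_{p(\bm{x})}[\mathcal{L}(\bm{g}(\bm{x}))]$, the two copies of $\mathbb{E}_{p(\bm{x})}[\mathcal{L}(\bm{g}(\bm{x}))]$ cancel and what remains is $R(\bm{g})$, proving~(\ref{tot2}).

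For~(\ref{tot3}) no new work is needed: since $\mathcal{L}(\bm{g}(\bm{x}))$ is independent of $\overline{Y}$, for every $\gamma\in[0,1]$ one has $(1-\gamma)\mathbb{E}_{\overline{p}(\bm{x},\overline{Y})}[\mathcal{L}(\bm{g}(\bm{x}))]+\gamma\mathbb{E}_{\overline{p}(\bm{x})}[\mathcal{L}(\bm{g}(\bm{x}))]=\mathbb{E}_{\overline{p}(\bm{x})}[\mathcal{L}(\bm{g}(\bm{x}))]$, so the right-hand side of~(\ref{tot3}) collapses to the right-hand side of~(\ref{tot2}), which we have just shown equals $R(\bm{g})$. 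Overall the argument is bookkeeping; I do not anticipate a real obstacle, the only delicate points being the combinatorial marginalization in the first step (the count that each class is excluded from exactly $\binom{K-1}{c}$ size-$c$ sets) and tracking the $\pi_{c}$-weighting carefully enough that its product with $\lambda$ cancels exactly rather than merely up to lower-order terms.
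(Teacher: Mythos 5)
Your proposal is correct, and it reaches the result by a genuinely different route than the paper. The paper's proof re-runs the density-level argument of Lemma~\ref{T1} for the unconditional distribution: starting from (\ref{cond}) it derives the identity
\begin{equation*}
p(\bm{x},y)=\sum_{\overline{Y}\in\overline{\mathcal{Y}}}\overline{p}(\bm{x},\overline{Y})-\lambda\sum_{\overline{Y}\notin\overline{\mathcal{Y}}^{y}}\overline{p}(\bm{x},\overline{Y}),
\end{equation*}
substitutes it into $R(\bm{g})=\sum_{y}\int\ell(\bm{g}(\bm{x}),y)\,p(\bm{x},y)\,d\bm{x}$, and exchanges the order of summation. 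You instead treat Lemma~\ref{T1} as a black box, invert the per-$c$ risk identity to isolate $\mathbb{E}_{\overline{p}_{c}}\bigl[\sum_{y\in\overline{Y}}\ell(\bm{g}(\bm{x}),y)\bigr]=\frac{c}{K-1}\bigl(\mathbb{E}_{p(\bm{x})}[\mathcal{L}(\bm{g}(\bm{x}))]-R(\bm{g})\bigr)$, and then take the $\pi_{c}$-weighted mixture so that $\lambda$ cancels the factor $\sum_{c}c\pi_{c}/(K-1)$ exactly. Your route is more economical --- it avoids redoing the combinatorial counting over label sets --- but it does require the auxiliary marginal check $\overline{p}(\bm{x})=p(\bm{x})$ (each class is excluded from exactly $\binom{K-1}{c}$ size-$c$ sets), which you correctly supply; the paper's density-level substitution gets this for free because it never separates the two terms of the multi-complementary loss. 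Both arguments handle (\ref{tot3}) identically, by noting that $\mathcal{L}(\bm{g}(\bm{x}))$ is independent of $\overline{Y}$ so the $\gamma$-split collapses. No gaps.
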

The proof can be found in the Appendix \ref{AC}.

From the Theorem \ref{CPI}, the coefficient $\bm{\alpha}$ in (\ref{tot1}) and (\ref{MCUL}) is substituted by the class-prior $\pi_{c}$. Compared with (\ref{tot1}) that converges in the rate of $\mathcal{O}(\sum_{c=1}^{K-1}1/\sqrt{n_{c}})$, MCL$^{cl}$ risk (\ref{tot2}) converges in $\mathcal{O}(1/\sqrt{n})$, which often indicates a faster convergence rate. We will experimentally evaluate the helpfulness of integrating class-prior information in the next section.
\section{Experiments}

In this section, we experimentally evaluate the proposed methods on nine benchmark datasets including: PENDIGITS, LETTER, SATIMAGE, USPS, MNIST \cite{MNIST}, Fashion-MNIST \cite{FMNIST}, Kuzushi-MNIST \cite{kmnist}, EMNIST-balanced \cite{EMNIST} and SVHN \cite{SVHN}. The first three datasets can be downloaded from the \textit{UCI machine learning repository} and all the other datasets are available on public websites. We compare three complementary-label learning baseline methods: Pairwise Comparison(\textit{PC}) with sigmoid loss from \cite{DBLP:conf/nips/IshidaNHS17}, Forward Correction(\textit{Fwd}) from \cite{DBLP:conf/eccv/YuLGT18} and Gradient Ascent(\textit{GA}) from \cite{DBLP:conf/icml/IshidaNMS19}.

The details of the datasets are shown in the following sections. The implementation is based on Pytorch.
\subsection{Experimental Setup}
In the experiments, the empirical risk minimization of MCL and MCUL is conducted by minimizing the risk formulation (\ref{unbiased}) and (\ref{EMMCUL}) \textit{w.r.t.} softmax cross-entropy loss. \textit{GA}, and \textit{Fwd} follow the setting above and \textit{PC} is trained with pairwise-comparison loss. \textit{Adam} \cite{Adam} is applied for optimization. All the datasets are split into training/testing sets with a 9:1 ratio and the training sets are further divided into training/validation sets with the same ratio.

To ensure that the assumption (\ref{E3}) is satisfied, each complementary-label set of size \textit{c} is generated by randomly choosing \textit{c} labels from the candidate labels other than the true label. Though a sample with \textit{c} complementary labels can  be simply split into \textit{c} samples with one complementary label each, it's obvious that the \textit{c} samples are not independent of each other. Therefore this approach will lead to serious violation of the fundamental \textit{i.i.d.} assumption. For fair comparison in these experiments, the complementary labels are generated in the same way as in \cite{DBLP:conf/icml/IshidaNMS19}.

For PENDIGITS, LETTER, SATIMAGE, USPS, and MNIST, a linear-in-input model with a bias term is used. For MNIST, the learning rate is fixed to 1e-4; weight decay 1e-4; maximum iterations 60000; and batch size is set to 100. For the rest datasets, the learning rate is selected from $\{$1e-1, 1e-2, 1e-3, 1e-4$\}$ and the number of maximum iterations is changed to 5000.

For Fashion-MNIST, Kuzushi-MNIST and EMNIST-balanced, a MLP model(\textit{d}-500-\textit{K}) is trained for 300 epochs. The learning rate and weight decay are fixed to 1e-4 and the batch size is 256. For SVHN, Resnet-18 \cite{resnet} is deployed and trained for 120 epochs. The learning rate is selected from $\{$1e-2, 1e-3, 1e-4$\}$ and weight decay is fixed to 5e-4. To alleviate overfitting by forcing the non-negativity of loss functions \cite{nn}, in the experiments on flexible models, MCL and MCUL losses are replaced by their absolute values.

Experiments on datasets with or without unlabeled samples are both conducted. When the unlabeled samples are incorporated, we randomly set 99\% of training samples to be unlabeled for datasets with less than 50000 samples, which is a common setting in previous studies of weakly-supervised learning \cite{DBLP:conf/icml/SakaiPNS17,nn}. The fraction is further increased to 99.5\% for datasets with more than 50000 samples.

In respect of parameter setting, the setting of baseline methods follow the previous work \cite{DBLP:conf/icml/IshidaNMS19}. For MCUL, we use $\gamma\!=0.1$. The parameter $\bm{\alpha}$ is set according to the equations below:
$$\begin{cases}
\alpha_{i}:\alpha_{j}= \frac{n_{i}}{(K-i)^{2}}:\frac{n_{j}}{(K-j)^{2}}.\\
\sum_{i=1}^{K-1}\alpha_{1} =1.
\end{cases}$$

In our methods, we make no assumption on the distribution of the size of complementary-label sets. To generate the multi-complementarily labeled samples as close to the reality, we suppose that samples with too few or too many complementary labels are less likely to appear. Then the $n_{i}$ follows the equation below:
$$n_{i}:n_{j}=e^{-(i-\mu)^{2}}:e^{-(j-\mu)^{2}}.$$ In the experiments, $\mu = \frac{K}{2}$ is used.
\subsection{Experiments on Linear Model and Flexible Models}
The experimental results of linear and flexible models are summarized in Table \ref{TB1} and Table \ref{TB22} respectively. The experimental results under the presence of unlabeled samples are shown in the second row corresponding to each dataset.

\begin{table*}[htbp]
\caption{\footnotesize Test mean and standard deviation of the classification accuracy of linear model for 10 trials. The best one is emphasized in bold. \#n, \#f and \#c denote the number of samples, features and classes of each dataset. The results of experiments under the presence of unlabeled samples are shown in the second row corresponding to each dataset.}
\footnotesize
\centering
\begin{tabular}{ccccccccc}\hline
\label{TB1}
Datasets&\#n&\#f&\#c&\textit{PC}&\textit{Fwd}&\textit{GA}&\textit{MCL}&\textit{MCUL}\\\hline
\rule{0pt}{10pt}
\multirow{2}*{PENDIGITS}&\multirow{2}*{10092}&\multirow{2}*{16}&\multirow{2}*{10}&62.98$\pm$5.09&77.91$\pm$2.83&15.01$\pm$6.71&\textbf{84.32$\pm$1.09}&--~~$\pm$~~--\\&&&&5.98$\pm$4.24&8.09$\pm$1.71&11.05$\pm$5.05&11.44$\pm$5.68&\textbf{28.03$\pm$8.10}\\\rule{0pt}{15pt}
\multirow{2}*{LETTER}&\multirow{2}*{20000}&\multirow{2}*{16}&\multirow{2}*{26}&9.17$\pm$2.44&9.37$\pm$2.52&4.68$\pm$0.81&\textbf{45.75$\pm$2.31}&--~~$\pm$~~--\\&&&&4.66$\pm$2.78&3.99$\pm$0.88&5.12$\pm$2.01&5.33$\pm$0.76&\textbf{8.41$\pm$1.86}\\\rule{0pt}{15pt}
\multirow{2}*{SATIMAGE}&\multirow{2}*{6435}&\multirow{2}*{36}&\multirow{2}*{6}&74.05$\pm$4.65&77.45$\pm$5.43&38.32$\pm$1.82&\textbf{82.10$\pm$1.82}&--~~$\pm$~~--\\&&&&15.92$\pm$6.67&17.77$\pm$10.98&17.68$\pm$12.08&21.13$\pm$7.57&\textbf{51.35$\pm$7.94}\\\rule{0pt}{15pt}
\multirow{2}*{USPS}&\multirow{2}*{9298}&\multirow{2}*{256}&\multirow{2}*{10}&41.75$\pm$5.45&46.15$\pm$13.10&14.19$\pm$6.07&\textbf{82.31$\pm$2.32}&--~~$\pm$~~--\\&&&&9.66$\pm$4.72&11.56$\pm$7.23&8.57$\pm$3.38&9.66$\pm$6.62&\textbf{26.20$\pm$3.98}\\\rule{0pt}{15pt}
\multirow{2}*{MNIST}&\multirow{2}*{70000}&\multirow{2}*{784}&\multirow{2}*{10}&51.21$\pm$4.87&52.17$\pm$4.88&67.80$\pm$1.89&\textbf{77.36$\pm$1.27}&--~~$\pm$~~--\\&&&&13.78$\pm$4.16&13.58$\pm$2.11&14.86$\pm$1.77&25.86$\pm$3.49&\textbf{38.21$\pm$5.24}\\\hline
\end{tabular}
\end{table*}

\begin{table*}[htbp]
\footnotesize
\caption{\footnotesize Test mean and standard deviation of the classification accuracy of flexible models for 4 trials. The best one is emphasized in bold. \#n, \#f and \#c denote the number of samples, features and classes of each dataset. The results of experiments under the presence of unlabeled samples are shown in the second row corresponding to each dataset.}
\centering
\begin{tabular}{ccccccccc}\hline
\label{TB22}
Datasets&\#n&\#f&\#c&\textit{PC}&\textit{Fwd}&\textit{GA}&\textit{MCL}&\textit{MCUL}\\\hline
\rule{0pt}{10pt}
\multirow{2}*{Fashion-MNIST}&\multirow{2}*{70000}&\multirow{2}*{784}&\multirow{2}*{10}&77.34$\pm$0.88&83.49$\pm$0.18&81.73$\pm$0.25&\textbf{84.97$\pm$0.09}&--~~$\pm$~~--\\&&&&25.75$\pm$4.39&23.36$\pm$3.05&22.34$\pm$3.18&49.81$\pm$4.42&\textbf{56.93$\pm$1.66}\\\rule{0pt}{15pt}
\multirow{2}*{Kuzushi-MNIST}&\multirow{2}*{70000}&\multirow{2}*{784}&\multirow{2}*{10}&59.31$\pm$1.07&66.46$\pm$0.17&70.68$\pm$0.88&\textbf{79.25$\pm$0.28}&--~~$\pm$~~--\\&&&&16.35$\pm$4.92&12.94$\pm$2.72&15.13$\pm$2.42&23.42$\pm$1.15&\textbf{31.04$\pm$1.89}\\\rule{0pt}{15pt}
\multirow{2}*{EMNIST-balanced}&\multirow{2}*{131600}&\multirow{2}*{784}&\multirow{2}*{47}&14.28$\pm$1.18&18.21$\pm$2.93&4.25$\pm$0.71&\textbf{65.41$\pm$0.22}&--~~$\pm$~~--\\&&&&2.36$\pm$0.36&2.68$\pm$0.08&2.54$\pm$0.36&4.14$\pm$1.13&\textbf{6.77$\pm$0.51}\\\rule{0pt}{15pt}
\multirow{2}*{SVHN}&\multirow{2}*{99289}&\multirow{2}*{1024}&\multirow{2}*{10}&20.74$\pm$2.61&76.27$\pm$2.07&6.87$\pm$0.41&\textbf{83.27$\pm$0.55}&--~~$\pm$~~--\\&&&&14.77$\pm$1.27&12.95$\pm$0.75&17.56$\pm$1.54&19.05$\pm$0.44&\textbf{19.47$\pm$0.17}\\\hline\end{tabular}
\end{table*}

\begin{figure*}[htbp]
\subfigure[FMNIST]{\includegraphics[keepaspectratio,width=8.5cm]{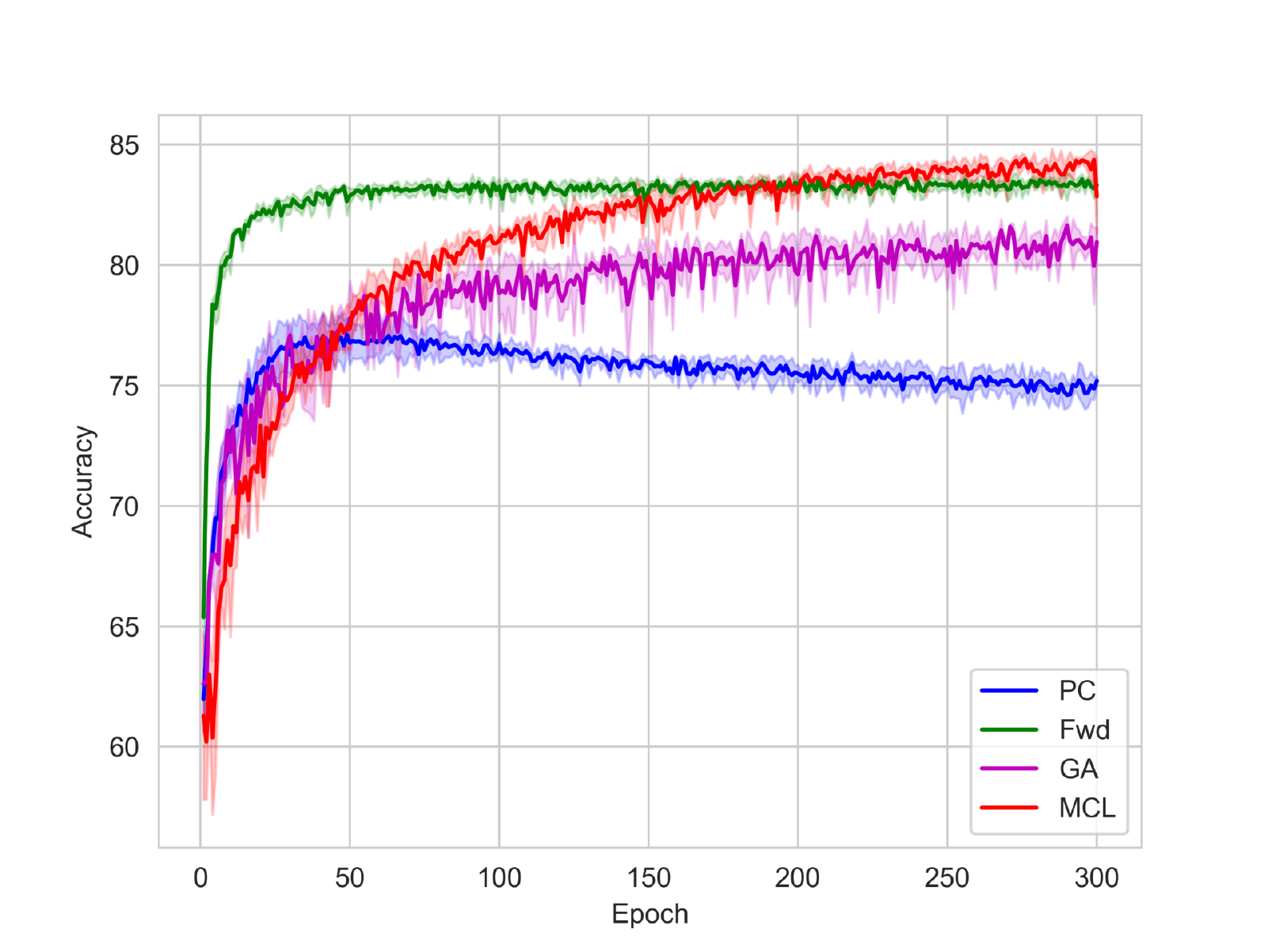}}
\subfigure[KMNIST]{\includegraphics[keepaspectratio,width=8.5cm]{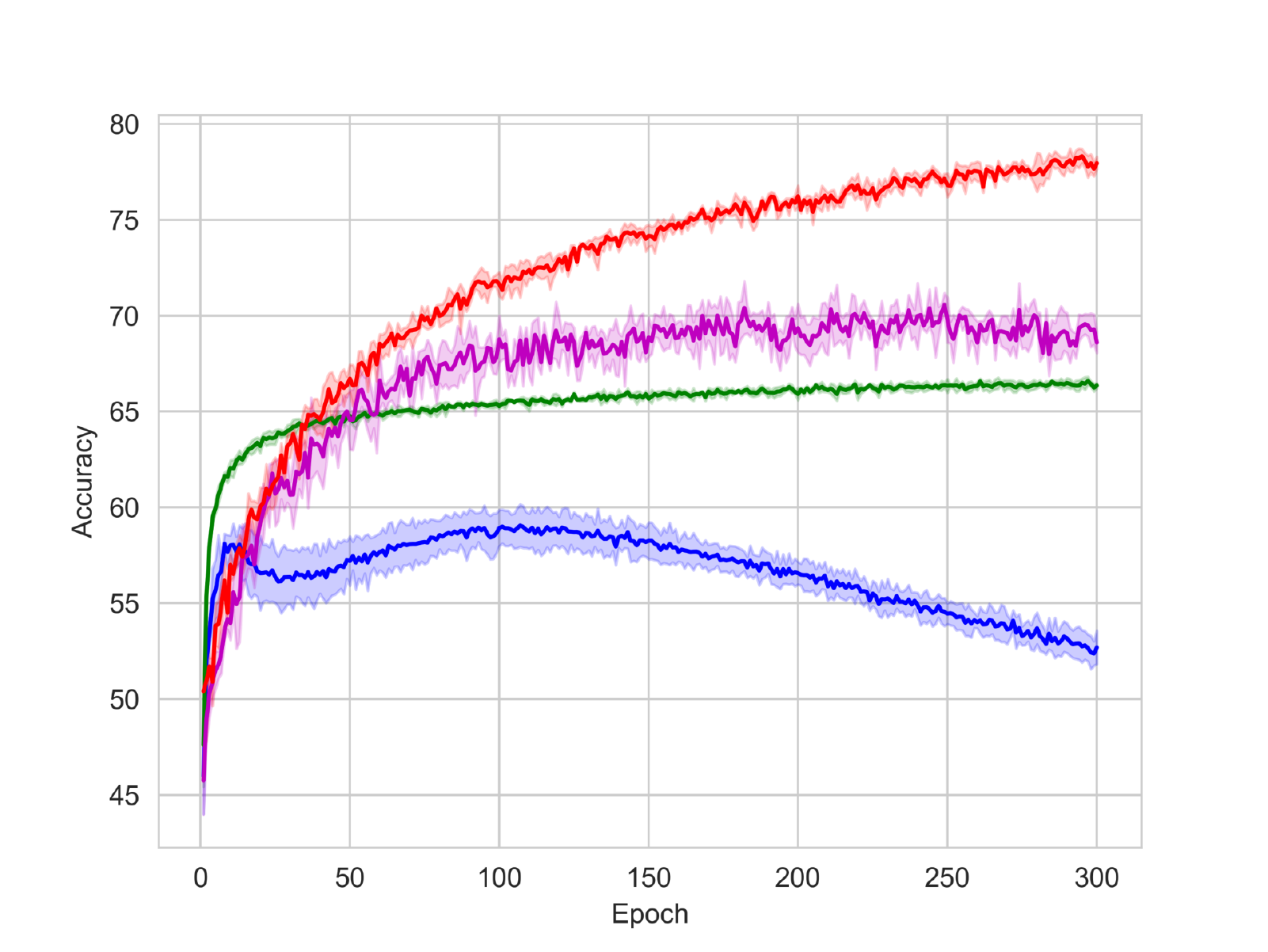}}\\
\subfigure[EMNIST-balanced]{\includegraphics[keepaspectratio,width=8.5cm]{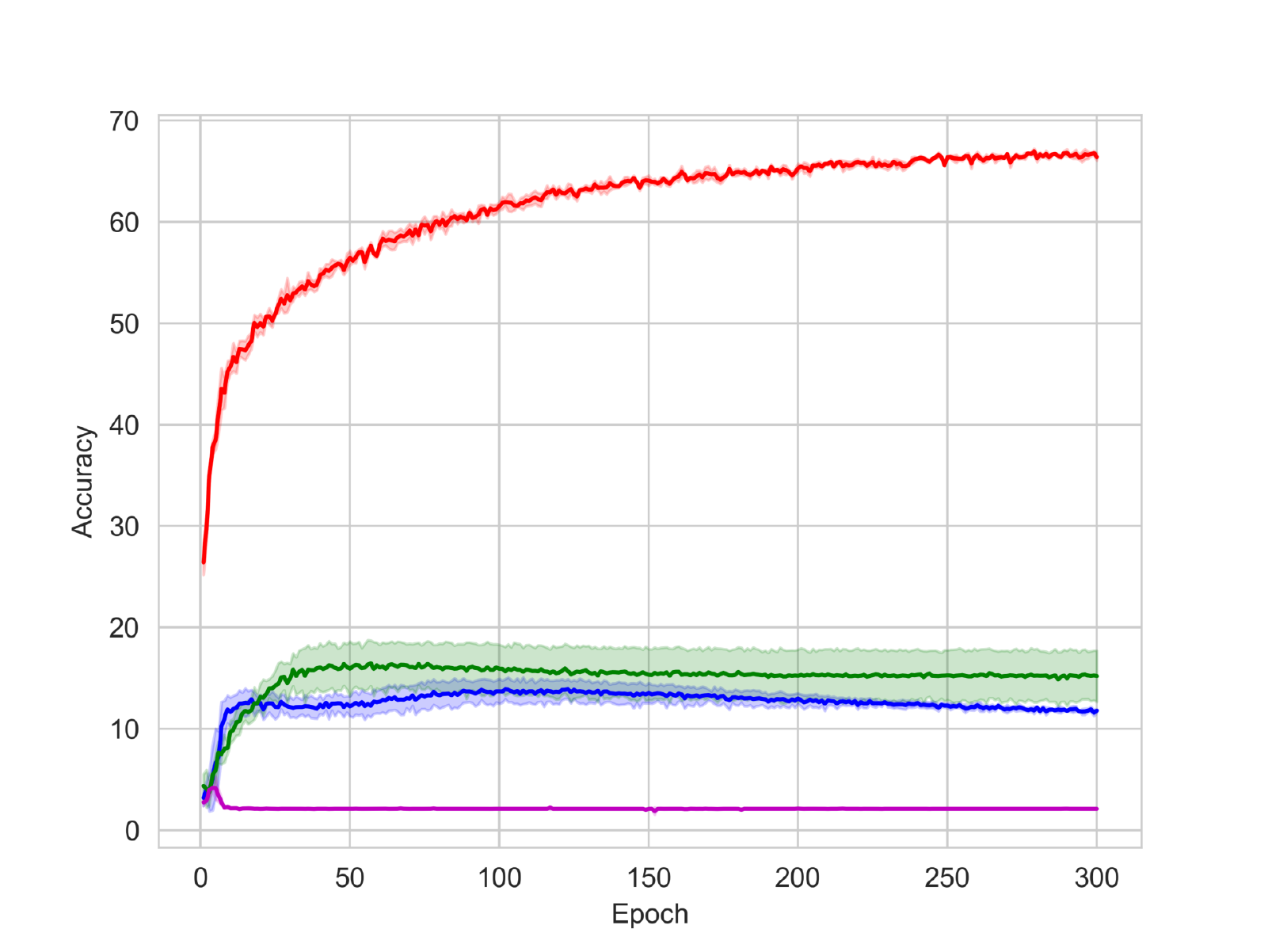}}
\subfigure[SVHN]{\includegraphics[keepaspectratio,width=8.5cm]{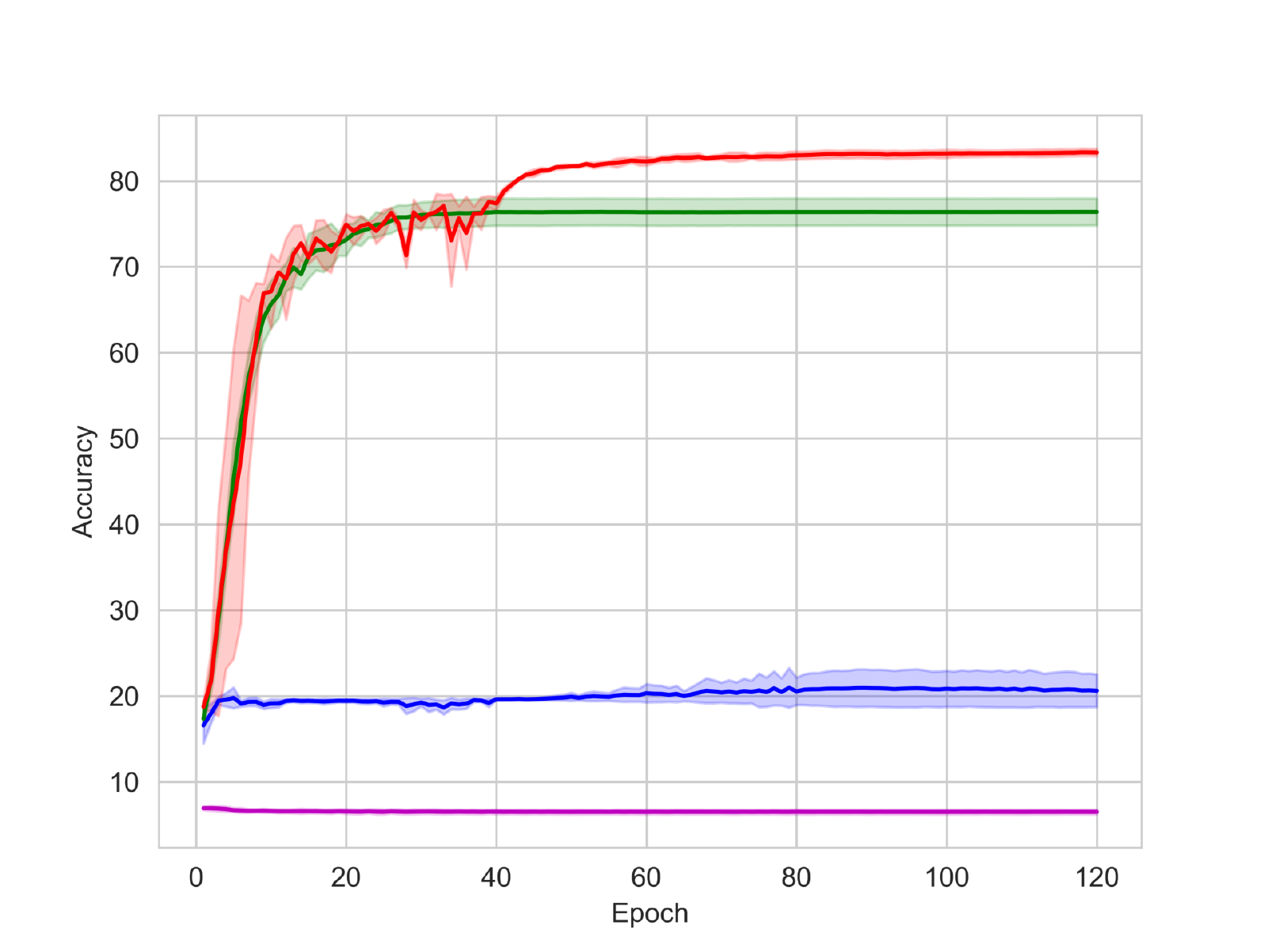}}
\caption{The training curves of baseline methods and MCL on flexible models.}

\label{F1}
\end{figure*}
{\em Results of MCL:} First we compare the proposed MCL framework with the three baseline methods. From the experimental results, we can see that MCL framework outperforms the baseline methods on all the datasets regardless of which model is applied. The superiority of MCL is especially apparent when the datasets have a large number of classes. Due to the experimental results on LETTER and EMNIST-balanced, it can be seen that the baseline methods can hardly generate effective classifiers. Furthermore, the training curves in Figure \ref{F1} show that in most cases, MCL can generate better classifiers and converge in a faster rate. Compared with the baseline methods, MCL remains valid owing to the capability of utilizing multi-complementarily labeled samples.

{\em Results of MCUL:} From Table \ref{T1}, as can be seen, under the presence of a great percentage of unlabeled samples, the baseline methods suffer from the lack of complementarily labeled samples, while MCUL can still enhance its performance by incorporating unlabeled samples. Moreover, in the cases that only a small number of complementarily labeled samples are available, the performance of baseline methods \textit{GA} is seriously degraded due to the imprecise estimation of class prior, which is one of the reasons that the performance of \textit{GA} is relatively weak with a small fraction of complementarily labeled samples.
\subsection{Experiments with Accessible Class-Prior Probability}
In this section, we further show the benefits of integrating the class-prior information. We compare the performance of MCL and MCUL with MCL$^{cl}$ and MCUL$^{cl}$ on linear model. The experimental results are summarized in Table \ref{TB2} and the setup is consistent with that in the previous experiments.

From the results, we can see that by integrating the class-prior probability, MCL$^{cl}$ outperforms MCL on most datasets and MCUL$^{cl}$ performs better than MCUL on all the datasets, which shows the helpfulness of utilizing the class-prior information when it is accessible. Furthermore, under the presence of a great percentage of unlabeled samples, the integration of class-prior information can always boost the performance of models. A rational explanation is that when supervision information is inadequate, the benefits of incorporating class-prior information is more conspicuous.

\begin{table}[tbph]
\caption{\footnotesize Test mean and standard deviation of the classification accuracy of linear model for 10 trials. The best one is emphasized in bold. The results of experiments under the presence of unlabeled samples are shown in the second row corresponding to each dataset.}
\footnotesize
\centering
\begin{tabular}{ccccc}\hline
\label{TB2}
Datasets&\textit{MCL}&\textit{MCUL}&\textit{MCL}$^{cl}$&\textit{MCUL}$^{cl}$\\\hline\rule{0pt}{10pt}
\multirow{2}*{PENDIGITS}&\textbf{84.32$\pm$1.09}&--~~$\pm$~~--&84.27$\pm$0.26&--~~$\pm$~~--\\&11.44$\pm$5.68&28.03$\pm$8.10&13.82$\pm$2.39&\textbf{38.47$\pm$5.68}\\\rule{0pt}{15pt}
\multirow{2}*{LETTER}&45.75$\pm$2.31&--~~$\pm$~~--&\textbf{53.29$\pm$5.31}&--~~$\pm$~~--\\&5.33$\pm$0.76&8.41$\pm$1.86&17.32$\pm$1.84&\textbf{18.54$\pm$0.92}\\\rule{0pt}{15pt}
\multirow{2}*{SATIMAGE}&82.10$\pm$1.82&--~~$\pm$~~--&\textbf{83.53$\pm$1.19}&--~~$\pm$~~--\\&21.13$\pm$7.57&51.35$\pm$7.94&39.61$\pm$9.57&\textbf{54.23$\pm$6.14}\\\rule{0pt}{15pt}
\multirow{2}*{USPS}&\textbf{82.31$\pm$2.32}&--~~$\pm$~~--&81.07$\pm$1.58&--~~$\pm$~~--\\&9.66$\pm$6.62&26.20$\pm$3.98&24.29$\pm$4.72&\textbf{39.58$\pm$2.99}\\\rule{0pt}{15pt}
\multirow{2}*{MNIST}&77.36$\pm$1.27&--~~$\pm$~~--&\textbf{81.47$\pm$2.46}&--~~$\pm$~~--\\&25.86$\pm$3.49&38.21$\pm$5.24&27.54$\pm$2.71&\textbf{41.58$\pm$5.27}\\\hline
\end{tabular}
\end{table}

\section{Conclusion}
We first derive the MCL framework to learn from samples with any number of complementary labels for arbitrary losses and models. Then we incorporate unlabeled data into the risk formulation and propose the MCUL framework to enhance the performance of MCL by learning from multi-complementarily labeled data and unlabeled data simultaneously. We further show that the estimation error bounds of the proposed methods are in the optimal parametric convergence rate. Finally, we conduct experiments and show our methods outperform the current state-of-the-art methods on both linear and deep model. A promising direction is applying our methods on crowdsourcing and other weakly-supervised classification scenarios, which is our future work.
\section*{Acknowledgement}
This work was supported in part by the National Natural Science Foundation of China under Grant 11671010, and Beijing Natural Science Foundation under Grant 4172035.

\newpage
\section*{Appendix}
\subsection*{A. Proof of Lemma \ref{T1}}
\label{AA}

\begin{proof}
Suppose $\overline{Y}\in\overline{\mathcal{Y}}_{c}$, then we can obtain $|\overline{Y}|=\bc$. Denote $\{\overline{Y}|~\overline{Y}\in\mathcal{Y}_{c},y\notin\overline{Y}\}$ with $\overline{\mathcal{Y}}^{y}_{c}$. Due to assumption (\ref{E3}), the equations below hold:
$$\begin{cases}
\sum\limits_{\overline{Y}\in\overline{\mathcal{Y}}^{y}_{c}}\overline{p}_{c}(\bm{x},\overline{Y})=p(\bm{x},y)+\frac{K-\bc-1}{K-1}\sum\limits_{\hat{y}\not=y}p(\bm{x},\hat{y}),\\
\sum\limits_{\overline{Y}\notin\overline{\mathcal{Y}}^{y}_{c}}\overline{p}_{c}(\bm{x},\overline{Y})=\frac{\bc}{K-1}\sum\limits_{\hat{y}\not=y}p(\bm{x},\hat{y}).\\
\end{cases}
$$
We can get the following equation by substituting the second equation above into the first equation:
\begin{align}
\label{p1}
    p(\bm{x},y)&=\sum\limits_{\overline{Y}\in\overline{\mathcal{Y}}^{y}_{c}}\overline{p}_{c}(\bm{x},\overline{Y})-\frac{K-\bc-1}{\bc}\sum\limits_{\overline{Y}\notin\overline{\mathcal{Y}}^{y}_{c}}\overline{p}_{c}(\bm{x},\overline{Y})\nonumber\\
    &=\sum\limits_{\overline{Y}\in\overline{\mathcal{Y}}_{c}}\overline{p}_{c}(\bm{x},\overline{Y})-\frac{K-1}{\bc}\sum\limits_{\overline{Y}\notin\overline{\mathcal{Y}}^{y}_{c}}\overline{p}_{c}(\bm{x},\overline{Y}).\nonumber
\end{align}
Then we can rewrite the classification risk (\ref{E1}) due to the equation above:
\begin{align}
    R(\bm{g})&= \sum\limits_{y}\int \ell(\bm{g}(\bm{x}),y)p(\bm{x},y)d\bm{x}\nonumber\\
    &=\sum\limits_{y}\int \ell(\bm{g}(\bm{x}),y)\sum\limits_{\overline{Y}\in\overline{\mathcal{Y}}_{c}}\overline{p}_{c}(\bm{x},\overline{Y})d\bm{x}\nonumber\\
    &~~~~-\frac{K-1}{\bc}\sum\limits_{y}\int \ell(\bm{g}(\bm{x}),y)\sum\limits_{\overline{Y}\notin\overline{\mathcal{Y}}^{y}_{c}}\overline{p}_{c}(\bm{x},\overline{Y})d\bm{x}.\nonumber
\end{align}
By exchanging the order of summation, we can get an equal version of the last equation above and finally conclude the proof:
\begin{align}
    R(\bm{g})&=\sum\limits_{\overline{Y}\in\overline{\mathcal{Y}}_{c}}\int \sum\limits_{y}\ell(\bm{g}(\bm{x}),y)\overline{p}_{c}(\bm{x},\overline{Y})d\bm{x}\nonumber\\
    &~~~~-\frac{K-1}{\bc}\sum\limits_{\overline{Y}\in\overline{\mathcal{Y}}_{c}}\int\sum\limits_{y\in\overline{Y}}\ell(\bm{g}(\bm{x}),y)\overline{p}_{c}(\bm{x},\overline{Y})d\bm{x}\nonumber\\
    &=\sum\limits_{\overline{Y}\in\overline{\mathcal{Y}}_{c}}\int\left(\sum\limits_{y}\ell(\bm{g}(\bm{x}),y)-\dfrac{K\!-\!1}{|\overline{Y}|}\sum\limits_{y\in\overline{Y}}\ell(\bm{g}(\bm{x}),y)\right)\overline{p}_{c}(\bm{x},\overline{Y})d\bm{x}\nonumber\\
    &=\mathbb{E}_{\overline{p}(\bm{x},\overline{Y})}[\overline{l}(\bm{x},\overline{Y})]=R_{c}(\bm{g}).\nonumber
\end{align}
\end{proof}

\subsection*{B. Proof of Theorem \ref{MCUL_bound}}
\label{AB}
The proof of Theorem \ref{MCL_bound} is omitted since it is the special case of Theorem \ref{MCUL_bound} by setting $\gamma$ to 0. 

First we introduce the Talagrand's contraction lemma \cite{tala}:
\begin{lemma}
\label{L7}
Let $\mathcal{G}$ be a class of real functions and $\mathcal{G}^{K}=[\mathcal{G}_{i}]_{i=1}^{K}$ be a K-dimensional function class and $\ell:\mathbb{R}^{K}\rightarrow \mathbb{R}$ a Lipschitz function with constant $L_{\ell}$ and  $\ell(0)=0$. Then $\mathfrak{R}_{n}(\ell\circ\mathcal{G}^{K})\leq KL_{\ell}\mathfrak{R}_{n}(\mathcal{G})$.
\end{lemma}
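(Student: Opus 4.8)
The plan is to reduce this $K$-dimensional contraction to $K$ applications of the one-dimensional Ledoux--Talagrand contraction lemma of \cite{tala}, which is the only contraction tool at hand. First I would unfold the definition
\[
\mathfrak{R}_{n}(\ell\circ\mathcal{G}^{K})=\frac{1}{n}\,\mathbb{E}_{\bm{x},\sigma}\Big[\sup_{\bm{g}\in\mathcal{G}^{K}}\sum_{i=1}^{n}\sigma_{i}\,\ell\big(g_{1}(\bm{x}_{i}),\ldots,g_{K}(\bm{x}_{i})\big)\Big],
\]
so that the quantity to control is a Rademacher average of a supremum, over the product class $\mathcal{G}^{K}$, of the scalar-valued composite $\ell\circ\bm{g}$. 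The whole difficulty is that $\ell$ mixes the $K$ coordinate outputs, whereas the available contraction lemma only handles a scalar Lipschitz map applied to a single varying function.

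Second, I would break the multivariate dependence into single-coordinate increments by telescoping. Writing $\bm{u}^{(k)}=(u_{1},\ldots,u_{k},0,\ldots,0)$ and using $\ell(\bm{0})=0$ gives $\ell(\bm{u})=\sum_{k=1}^{K}\big(\ell(\bm{u}^{(k)})-\ell(\bm{u}^{(k-1)})\big)$. Each increment $\Delta_{k}(\bm{u}):=\ell(\bm{u}^{(k)})-\ell(\bm{u}^{(k-1)})$ vanishes whenever $u_{k}=0$ and, since $\ell$ is $L_{\ell}$-Lipschitz, is $L_{\ell}$-Lipschitz as a function of the single coordinate $u_{k}$ with the remaining coordinates held fixed. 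Substituting this expansion into the display and invoking subadditivity of the supremum, I would bound $\mathfrak{R}_{n}(\ell\circ\mathcal{G}^{K})$ by $\sum_{k=1}^{K}$ of the per-coordinate Rademacher averages built from $\Delta_{k}\circ\bm{g}$.

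Third, for each fixed $k$ I would apply the scalar contraction of \cite{tala} in the variable $g_{k}(\bm{x}_{i})$: because $\Delta_{k}$ is $L_{\ell}$-Lipschitz in that coordinate and vanishes at the origin, the contraction strips $\Delta_{k}$ down to the identity at the cost of a factor $L_{\ell}$, producing $L_{\ell}\,\mathfrak{R}_{n}(\mathcal{G}_{k})$. Since each coordinate class $\mathcal{G}_{k}$ coincides with $\mathcal{G}$, bounding $\mathfrak{R}_{n}(\mathcal{G}_{k})\le\mathfrak{R}_{n}(\mathcal{G})$ and summing over the $K$ coordinates would yield the claimed $KL_{\ell}\,\mathfrak{R}_{n}(\mathcal{G})$.

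The hard part will be the coupling across coordinates: the increment $\Delta_{k}$ depends not only on $g_{k}$ but also on $g_{1},\ldots,g_{k-1}$, whereas the scalar contraction requires the outer Lipschitz map to be \emph{fixed} while only the inner function ranges over the class, so Step~3 cannot be applied verbatim after the naive split of Step~2. I expect to resolve this by conditioning on the sample, the Rademacher signs, and all coordinate functions other than the $k$-th before invoking the contraction, and then taking outer expectations; a more robust alternative is the Maurer-style decoupling that introduces an independent Rademacher sequence per coordinate and compares the coupled process to a sum of $K$ single-coordinate processes. The two delicate checks are (i) confirming that the $L_{\ell}$-Lipschitz and vanishing-at-zero properties of $\Delta_{k}$ genuinely survive the conditioning, so the scalar lemma is legitimately applicable, and (ii) tracking the numerical constant so that the aggregate lands on exactly $KL_{\ell}$ rather than picking up a spurious extra factor from the decoupling step.
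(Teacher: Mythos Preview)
The paper does not prove this lemma at all: it is introduced verbatim as ``Talagrand's contraction lemma'' with a citation to \cite{tala} and then used as a black-box tool in the proof of Theorem~\ref{MCUL_bound}. There is therefore no ``paper's own proof'' to compare your attempt against; you are supplying an argument where the authors simply appeal to the literature.

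On the substance of your sketch: the telescoping decomposition and the reduction to scalar contraction are the standard route, and you correctly isolate the real obstacle, namely that after the split the $k$-th increment $\Delta_{k}$ still depends on $g_{1},\ldots,g_{k-1}$, which are being optimised jointly with $g_{k}$ under the same Rademacher draw. Your first proposed fix, ``conditioning on the Rademacher signs'', cannot work as written: the scalar contraction lemma is a statement about $\mathbb{E}_{\sigma}[\sup\cdots]$, so once you condition on $\sigma$ there is no expectation left to which it applies, and conditioning on $g_{1},\ldots,g_{k-1}$ is meaningless because they are not random. The second fix, a Maurer-type decoupling with an independent Rademacher sequence per coordinate, does go through, but in its usual form it carries an extra $\sqrt{2}$ factor, so you would land on $\sqrt{2}\,KL_{\ell}\mathfrak{R}_{n}(\mathcal{G})$ rather than exactly $KL_{\ell}\mathfrak{R}_{n}(\mathcal{G})$; matching the stated constant requires the sharper sample-by-sample peeling argument in the spirit of Ledoux--Talagrand rather than a global decoupling. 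For the purposes of this paper none of this matters, since the lemma is quoted, not proved.
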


To apply Talagrand's contraction lemma, we use the shifted loss $\tilde{\ell}(z)=\ell(z)-\ell(0)$ instead of $\ell(z)$. Then we abbreviate some complex terms in these forms:
$$\begin{cases}
    \bm{A}=\gamma\mathbb{E}_{p(\bm{x})}[\sum\limits_{y}\tilde{\ell}(\bm{g}(\bm{x}),y)]\\ \bm{B}_{c}=\mathbb{E}_{\overline{p}_{c}(\bm{x},\overline{Y})}[(1-\gamma)\sum\limits_{y}\tilde{\ell}(\bm{g}(\bm{x}),y)-\dfrac{K\!-\!1}{c}\sum\limits_{y\in\overline{Y}}\tilde{\ell}(\bm{g}(\bm{x}),y)]\\
     \tilde{\ell}_{A}(\bm{g}(\bm{x}))=\gamma\sum\limits_{y}\tilde{\ell}(\bm{g}(\bm{x}),y)\\  \tilde{\ell}_{B}^{c}(\bm{g}(\bm{x}),\overline{Y})=(1-\gamma)\sum\limits_{y}\tilde{\ell}(\bm{g}(\bm{x}),y)-\dfrac{K\!-\!1}{c}\sum\limits_{y\in\overline{Y}}\tilde{\ell}(\bm{g}(\bm{x}),y).
\end{cases}$$

We give the following conclusions:
\begin{lemma} Denote $\overline{p}_{c}(\bm{x},\overline{Y})$ with $\overline{p}_{c}$:
\begin{align}
    &\mathfrak{R}_{n}(\tilde{\ell}_{A}\circ\mathcal{G}^{K})\leq \gamma K^{2}L_{\ell}\mathfrak{R}_{n}(\mathcal{G}),\nonumber\\
    &\mathfrak{R}_{n,\overline{p}_{c}}(\tilde{\ell}_{B}^{c}\circ\mathcal{G}^{K})\leq \left(\frac{K-1}{\bc}+1-\gamma\right)K^{2}L_{\ell}\mathfrak{R}_{n}(\mathcal{G}).\nonumber
\end{align}
\end{lemma}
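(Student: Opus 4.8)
The plan is to derive both inequalities from three elementary facts: (i) Rademacher complexity is sub-additive under sums of function classes defined on the same sample and positively homogeneous under scalar multiplication, so a scalar (including its sign, which may be absorbed because the Rademacher variables are symmetric) can always be pulled out of the class; (ii) for every label $y$ the shifted loss $\tilde{\ell}(\cdot,y)$ is $L_{\ell}$-Lipschitz on $\mathbb{R}^{K}$ and satisfies $\tilde{\ell}(\bm{0},y)=0$, so Lemma \ref{L7} gives $\mathfrak{R}_{n}(\tilde{\ell}(\cdot,y)\circ\mathcal{G}^{K})\le KL_{\ell}\mathfrak{R}_{n}(\mathcal{G})$; and (iii) summing assumption (\ref{E3}) over $\overline{Y}$ shows that the $\bm{x}$-marginal of $\overline{p}_{c}(\bm{x},\overline{Y})$ is exactly $p(\bm{x})$, so the Rademacher complexity of $\mathcal{G}$ on a sample drawn from $\overline{p}_{c}$ coincides with $\mathfrak{R}_{n}(\mathcal{G})$ (the extra coordinate $\overline{Y}$ being irrelevant for a class depending only on $\bm{x}$).

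For the first estimate, I would first write $\tilde{\ell}_{A}\circ\mathcal{G}^{K}=\gamma\sum_{y=1}^{K}\left(\tilde{\ell}(\cdot,y)\circ\mathcal{G}^{K}\right)$. Pulling out $\gamma$ and using sub-additivity over $y$ gives $\mathfrak{R}_{n}(\tilde{\ell}_{A}\circ\mathcal{G}^{K})\le\gamma\sum_{y=1}^{K}\mathfrak{R}_{n}(\tilde{\ell}(\cdot,y)\circ\mathcal{G}^{K})$, and fact (ii) bounds each of the $K$ summands by $KL_{\ell}\mathfrak{R}_{n}(\mathcal{G})$, which yields exactly $\gamma K^{2}L_{\ell}\mathfrak{R}_{n}(\mathcal{G})$.

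For the second estimate, I would decompose $\tilde{\ell}_{B}^{c}(\bm{g}(\bm{x}),\overline{Y})=(1-\gamma)\sum_{y=1}^{K}\tilde{\ell}(\bm{g}(\bm{x}),y)-\frac{K-1}{c}\sum_{y=1}^{K}\mathbb{1}[y\in\overline{Y}]\,\tilde{\ell}(\bm{g}(\bm{x}),y)$. The first group is treated exactly as in the previous paragraph and contributes $(1-\gamma)K^{2}L_{\ell}\mathfrak{R}_{n}(\mathcal{G})$. For the second group, sub-additivity over $y$ reduces the task to bounding, for each fixed $y$, the Rademacher complexity of $\{(\bm{x},\overline{Y})\mapsto\mathbb{1}[y\in\overline{Y}]\,\tilde{\ell}(\bm{g}(\bm{x}),y):\bm{g}\in\mathcal{G}^{K}\}$; on any fixed realization of the sample, the map $\bm{z}\mapsto\mathbb{1}[y\in\overline{Y}_{i}]\,\tilde{\ell}(\bm{z},y)$ is $L_{\ell}$-Lipschitz (or identically zero) with value $0$ at $\bm{0}$, so the contraction lemma again yields $KL_{\ell}\mathfrak{R}_{n}(\mathcal{G})$. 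Summing over the $K$ values of $y$ and multiplying by $\frac{K-1}{c}$ gives $\frac{K-1}{c}K^{2}L_{\ell}\mathfrak{R}_{n}(\mathcal{G})$, and adding the first group produces $\left(\frac{K-1}{c}+1-\gamma\right)K^{2}L_{\ell}\mathfrak{R}_{n}(\mathcal{G})$.

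The one point that deserves care — and the step I expect to be the main obstacle — is the appeal to Lemma \ref{L7} in the second group, since the Lipschitz maps composed with $\mathcal{G}^{K}$ there depend on the sample index through $\mathbb{1}[y\in\overline{Y}_{i}]$; this is the per-sample ($\varphi_{i}$) form of the Ledoux--Talagrand contraction, which holds by the same argument as Lemma \ref{L7} but should be invoked explicitly. If one prefers to avoid index-dependent maps entirely, an alternative is to split $\sum_{y\in\overline{Y}}\tilde{\ell}(\cdot,y)$ into its $c$ individual summands and bound each by $KL_{\ell}\mathfrak{R}_{n}(\mathcal{G})$; this gives $\frac{K-1}{c}\cdot c\cdot KL_{\ell}\mathfrak{R}_{n}(\mathcal{G})=(K-1)KL_{\ell}\mathfrak{R}_{n}(\mathcal{G})$, which is at most $\frac{K-1}{c}K^{2}L_{\ell}\mathfrak{R}_{n}(\mathcal{G})$ since $c\le K$, and hence still proves the stated bound.
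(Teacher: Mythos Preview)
Your proof is correct and reaches the stated bounds. For the first inequality, your decomposition over $y$ followed by Lemma~\ref{L7} is equivalent to the paper's single application of Lemma~\ref{L7} to $\tilde{\ell}_A$ (which is $\gamma K L_\ell$-Lipschitz). The genuine difference is in the second inequality, in the handling of the term $\frac{K-1}{c}\sum_{y\in\overline{Y}}\tilde{\ell}$. You appeal directly to the per-sample ($\varphi_i$) form of the Ledoux--Talagrand contraction, correctly flagging that this is what is needed since the map $I(y\in\overline{Y}_i)\,\tilde{\ell}(\cdot,y)$ depends on the sample index. The paper instead introduces $\alpha_i=2I(y\in\overline{Y}_i)-1\in\{-1,+1\}$, writes $I(y\in\overline{Y}_i)=\tfrac{1}{2}(\alpha_i+1)$, splits the supremum, and absorbs the sign $\alpha_i$ into $\sigma_i$ by the symmetry of the Rademacher variables; both resulting pieces then involve only the label-free sum $\sum_y\tilde{\ell}(\cdot,y)$, so Lemma~\ref{L7} can be invoked in its index-independent form. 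The sign-absorption trick is essentially a hands-on proof of the special case of per-sample contraction that you cite, so the two routes are close in spirit; yours is more direct and explicit about the tool required, while the paper's avoids quoting the stronger contraction statement. One small correction: your proposed ``alternative'' of splitting $\sum_{y\in\overline{Y}}$ into its $c$ summands does not actually avoid index-dependent maps --- the $j$-th summand $\tilde{\ell}(\cdot,y_{i,(j)})$ still depends on $\overline{Y}_i$ --- so it too relies on the per-sample form (though it still yields a valid bound).
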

\begin{proof}
The first inequality can be deduced from Lemma \ref{L7} directly. By definition and the sub-additivity of supremum: 
\begin{align}
    \mathfrak{R}_{n,\overline{p}_{c}}(\tilde{\ell}_{B}^{c}\circ\mathcal{G}^{K})&=\mathbb{E}_{\mathcal{S}_{c}}\mathbb{E}_{\bm{\sigma}}\left[\sup\limits_{\bm{g}\in\mathcal{G}^{K}}\frac{1}{n}\sum\limits_{(\bm{x}_{i},\overline{Y}_{i})\in\mathcal{S}_{c}}\sigma_{i}\tilde{\ell}_{B}(\bm{g}(\bm{x}_{i}),\overline{Y}_{i})\right]\nonumber\\
    &\leq(1-\gamma)\mathbb{E}_{\mathcal{S}_{c}}\mathbb{E}_{\bm{\sigma}}\left[\sup\limits_{\bm{g}\in\mathcal{G}^{K}}\frac{1}{n}\sum\limits_{(\bm{x}_{i},\overline{Y}_{i})\in\mathcal{S}_{c}}\sigma_{i}\sum\limits_{y}\tilde{\ell}(\bm{g}(\bm{x}_{i}),y)\right]\nonumber\\&~~~+\frac{K-1}{\bc}\mathbb{E}_{\mathcal{S}_{c}}\mathbb{E}_{\bm{\sigma}}\left[\sup\limits_{\bm{g}\in\mathcal{G}^{K}}\frac{1}{n}\sum\limits_{(\bm{x}_{i},\overline{Y}_{i})\in\mathcal{S}_{c}}\sigma_{i}\sum\limits_{y\in\overline{Y}_{i}}\tilde{\ell}(\bm{g}(\bm{x}_{i}),y)\right]\nonumber
\end{align}
The $\tilde{\ell}_{B}$ is a fixed loss function and the first equation holds. Since $\Sigma_{y}\tilde{l}$ is independent of $\overline{Y}_{i}$, we can get:
\begin{align}
    \mathbb{E}_{\mathcal{S}_{c}}\mathbb{E}_{\bm{\sigma}}\left[\sup\limits_{\bm{g}\in\mathcal{G}^{K}}\frac{1}{n}\sum\limits_{(\bm{x}_{i},\overline{Y}_{i})\in\mathcal{S}_{c}}\sigma_{i}\sum\limits_{y}\tilde{\ell}(\bm{g}(\bm{x}_{i}),y)\right]=\mathbb{E}_{\mathcal{X}}\mathbb{E}_{\bm{\sigma}}\left[\sup\limits_{\bm{g}\in\mathcal{G}^{K}}\frac{1}{n}\sum\limits_{\bm{x}_{i}\in\mathcal{X}}\sigma_{i}\sum\limits_{y}\tilde{\ell}(\bm{g}(\bm{x}_{i}),y)\right]\nonumber
\end{align}
Let $I(\cdot)$ be the indicator function and $\alpha_{i} = 2I(y\in\overline{Y}_{i})-1$. Then we have the conclusion below:
\begin{align}
    \mathbb{E}_{\mathcal{S}_{c}}\mathbb{E}_{\bm{\sigma}}\left[\sup\limits_{\bm{g}\in\mathcal{G}^{K}}\frac{1}{n}\sum\limits_{(\bm{x}_{i},\overline{Y}_{i})\in\mathcal{S}_{c}}\sigma_{i}\sum\limits_{y\in\overline{Y}_{i}}\tilde{\ell}(\bm{g}(\bm{x}_{i}),y)\right]&=\mathbb{E}_{\mathcal{S}_{c}}\mathbb{E}_{\bm{\sigma}}\left[\sup\limits_{\bm{g}\in\mathcal{G}^{K}}\frac{1}{2n}\sum\limits_{(\bm{x}_{i},\overline{Y}_{i})\in\mathcal{S}_{c}}\sigma_{i}\sum\limits_{y}\tilde{\ell}(\bm{g}(\bm{x}_{i}),y)(\alpha_{i}+1)\right]\nonumber\\&\leq\mathbb{E}_{\mathcal{S}_{c}}\mathbb{E}_{\bm{\sigma}}\left[\sup\limits_{\bm{g}\in\mathcal{G}^{K}}\frac{1}{2n}\sum\limits_{(\bm{x}_{i},\overline{Y}_{i})\in\mathcal{S}_{c}}\alpha_{i}\sigma_{i}\sum\limits_{y}\tilde{\ell}(\bm{g}(\bm{x}_{i}),y)\right]\nonumber\\&+\mathbb{E}_{\mathcal{S}_{c}}\mathbb{E}_{\bm{\sigma}}\left[\sup\limits_{\bm{g}\in\mathcal{G}^{K}}\frac{1}{2n}\sum\limits_{(\bm{x}_{i},\overline{Y}_{i})\in\mathcal{S}_{c}}\sigma_{i}\sum\limits_{y}\tilde{\ell}(\bm{g}(\bm{x}_{i}),y)\right]\nonumber\\
    &=\mathbb{E}_{\mathcal{X}}\mathbb{E}_{\bm{\sigma}}\left[\sup\limits_{\bm{g}\in\mathcal{G}^{K}}\frac{1}{n}\sum\limits_{\bm{x}_{i}\in\mathcal{X}}\sigma_{i}\sum\limits_{y}\tilde{\ell}(\bm{g}(\bm{x}_{i}),y)\right]\nonumber
\end{align}
Then the inequalities hold:
\begin{align}
    \mathfrak{R}_{n,\overline{p}_{c}}(\tilde{\ell}_{B}^{c}\circ\mathcal{G}^{K})&\leq(1-\gamma)\mathbb{E}_{\mathcal{X}}\mathbb{E}_{\bm{\sigma}}\left[\sup\limits_{\bm{g}\in\mathcal{G}^{K}}\frac{1}{n}\sum\limits_{\bm{x}_{i}\in\mathcal{X}}\sigma_{i}\sum\limits_{y}\tilde{\ell}(\bm{g}(\bm{x}_{i}),y)\right]\nonumber\\&~~~+\frac{(K-1)}{\bc}\mathbb{E}_{\mathcal{X}}\mathbb{E}_{\bm{\sigma}}\left[\sup\limits_{\bm{g}\in\mathcal{G}^{K}}\frac{1}{n}\sum\limits_{\bm{x}_{i}\in\mathcal{X}}\sigma_{i}\sum\limits_{y}\tilde{\ell}(\bm{g}(\bm{x}_{i}),y)\right]\nonumber\\
    &=\left(\frac{(K-1}{\gamma\bc}+\frac{(1-\gamma)}{\gamma}\right)\mathfrak{R}_{n}(\tilde{\ell}_{A}\circ\mathcal{G}^{K})\nonumber\\
    &\leq\left(\frac{K-1}{\bc}+1-\gamma\right)K^{2}L_{\ell}\mathfrak{R}_{n}(\mathcal{G})\nonumber
\end{align}
\end{proof}

We can bound $\mbox{sup}_{\bm{g}\in\mathcal{G}^{K}}\left|\bm{A}-\hat{\bm{A}}\right|$ and $\mbox{sup}_{\bm{g}\in\mathcal{G}^{K}}\left|\bm{B}_{c}-\hat{\bm{B}}_{c}\right|$ using Mcdiarmid's inequality \cite{Mc}:
\begin{lemma}\label{L9}For a certain c, the inequalities below hold with probability at least $1-\delta$:
\begin{align}
    &\sup\limits_{\bm{g}\in\mathcal{G}^{K}}\left|\bm{A}-\hat{\bm{A}}\right|\leq \gamma K\left(2KL_{\ell}\mathfrak{R}_{n_{u}}(\mathcal{G})+C_{\ell}\sqrt{\dfrac{\ln2/\delta}{2n_{u}}}\right)\nonumber\\
    &\sup\limits_{\bm{g}\in\mathcal{G}^{K}}\left|\bm{B}_{c}-\hat{\bm{B}}_{c}\right|\leq \left(\frac{K-1}{\bc}+1-\gamma\right)K\left(2KL_{\ell}\mathfrak{R}_{n_{c}}(\mathcal{G})+C_{\ell}\sqrt{\dfrac{\ln2/\delta}{2n_{c}}}\right).\nonumber
\end{align}
\end{lemma}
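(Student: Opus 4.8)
The plan is to run the textbook uniform-convergence argument: use McDiarmid's bounded-difference inequality to replace each empirical supremum by its expectation, then use symmetrization to turn that expectation into a Rademacher complexity, and finish by substituting the Rademacher estimates already established for $\tilde{\ell}_{A}\circ\mathcal{G}^{K}$ and $\tilde{\ell}_{B}^{c}\circ\mathcal{G}^{K}$. Write $\Phi_{A}(\mathcal{S}_{u})=\sup_{\bm{g}\in\mathcal{G}^{K}}|\bm{A}-\hat{\bm{A}}|$ and $\Phi_{B_{c}}(\mathcal{S}_{c})=\sup_{\bm{g}\in\mathcal{G}^{K}}|\bm{B}_{c}-\hat{\bm{B}}_{c}|$; each is a deterministic function of an i.i.d. sample, so McDiarmid applies once we control how much it moves when a single example is replaced.

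\emph{Step 1 (bounded differences).} Since $0\le\ell\le C_{\ell}$ on $\mathcal{X}$, the shifted loss obeys $|\tilde{\ell}|\le C_{\ell}$, hence $|\tilde{\ell}_{A}(\bm{g}(\bm{x}))|=\gamma\bigl|\sum_{y}\tilde{\ell}(\bm{g}(\bm{x}),y)\bigr|\le\gamma K C_{\ell}$ and $|\tilde{\ell}_{B}^{c}(\bm{g}(\bm{x}),\overline{Y})|\le(1-\gamma)\sum_{y}|\tilde{\ell}|+\tfrac{K-1}{c}\sum_{y\in\overline{Y}}|\tilde{\ell}|\le\bigl(\tfrac{K-1}{c}+1-\gamma\bigr)K C_{\ell}$, the last step upper-bounding the $c$-term sum by the full $K$-term sum. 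Replacing one unlabeled point therefore changes $\hat{\bm{A}}$, and hence $\Phi_{A}$, by at most $\gamma K C_{\ell}/n_{u}$; replacing one multi-complementary example changes $\Phi_{B_{c}}$ by at most $\bigl(\tfrac{K-1}{c}+1-\gamma\bigr)K C_{\ell}/n_{c}$.

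\emph{Step 2 (concentration and symmetrization).} Applying McDiarmid's inequality in both tail directions and taking a union bound gives, with probability at least $1-\delta$,
$$\Phi_{A}\le\mathbb{E}[\Phi_{A}]+\gamma K C_{\ell}\sqrt{\frac{\ln(2/\delta)}{2n_{u}}},\qquad \Phi_{B_{c}}\le\mathbb{E}[\Phi_{B_{c}}]+\Bigl(\frac{K-1}{c}+1-\gamma\Bigr)K C_{\ell}\sqrt{\frac{\ln(2/\delta)}{2n_{c}}}.$$
The expectations are handled by the standard symmetrization inequality, $\mathbb{E}[\Phi_{A}]\le 2\mathfrak{R}_{n_{u}}(\tilde{\ell}_{A}\circ\mathcal{G}^{K})$ and $\mathbb{E}[\Phi_{B_{c}}]\le 2\mathfrak{R}_{n_{c},\overline{p}_{c}}(\tilde{\ell}_{B}^{c}\circ\mathcal{G}^{K})$, and then by the Rademacher bounds proved just above, namely $\mathfrak{R}_{n_{u}}(\tilde{\ell}_{A}\circ\mathcal{G}^{K})\le\gamma K^{2}L_{\ell}\mathfrak{R}_{n_{u}}(\mathcal{G})$ and $\mathfrak{R}_{n_{c},\overline{p}_{c}}(\tilde{\ell}_{B}^{c}\circ\mathcal{G}^{K})\le(\tfrac{K-1}{c}+1-\gamma)K^{2}L_{\ell}\mathfrak{R}_{n_{c}}(\mathcal{G})$. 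Combining with the McDiarmid terms yields exactly the two stated inequalities.

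\emph{Main obstacle.} The only non-mechanical part is the bookkeeping in Step 1 for $\Phi_{B_{c}}$: swapping an example changes both the feature $\bm{x}_{i}^{c}$ and its complementary-label set $\overline{Y}_{i}^{c}$ simultaneously, and one must verify that the shifted loss $\tilde{\ell}$ still enjoys the uniform bound $C_{\ell}$ (this is what licenses replacing $\sum_{y\in\overline{Y}}|\tilde{\ell}|$ by $\sum_{y=1}^{K}|\tilde{\ell}|$) so that the bounded-difference constant collapses to the clean value $\bigl(\tfrac{K-1}{c}+1-\gamma\bigr)K C_{\ell}$ rather than a messier expression. Everything after that is the routine McDiarmid--symmetrization pipeline together with the Rademacher inequalities already in hand.
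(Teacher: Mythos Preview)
Your proposal is correct and follows essentially the same route as the paper: bounded-difference constants from the pointwise bound on $\tilde{\ell}_{A}$ and $\tilde{\ell}_{B}^{c}$, McDiarmid for the concentration term, symmetrization for the expectation, and then plugging in the Rademacher estimates from the preceding lemma. The paper argues the one-sided deviation $\sup_{\bm g}(\bm A-\hat{\bm A})$ first and says ``the other direction is similar,'' whereas you work with $\Phi_{A}=\sup_{\bm g}|\bm A-\hat{\bm A}|$ directly; this is only a cosmetic difference and the constants line up.
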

\begin{proof}
We are going to prove the first inequality and the second can be proved in a similar way. Firstly, we consider the single direction $\mbox{sup}_{\bm{g}\in\mathcal{G}^{K}}(\bm{A}-\hat{\bm{A}})$. The $\hat{\ell}_{A}$ will not exceed $\gamma KC_{\ell}$ due to the definition, then
the change of $\mbox{sup}_{\bm{g}\in\mathcal{G}^{K}}(\bm{A}-\hat{\bm{A}})$ will not exceed $\gamma KC_{\ell}/n$ when we replace a single $\bm{x}_{i}$ with $\bm{x}_{i}^{'}$. Due to the Mcdiarmid's inequality, the inequality below holds with probability at least $1-\delta/2$:
\begin{align}
    \sup\limits_{\bm{g}\in\mathcal{G}^{K}}(\bm{A}-\hat{\bm{A}})\leq \mathbb{E}\left[\sup\limits_{\bm{g}\in\mathcal{G}^{K}}(\bm{A}-\hat{\bm{A}})\right]+\gamma KC_{\ell}\sqrt{\dfrac{\ln2/\delta}{2n_{u}}}\nonumber
\end{align}
Due to the symmetrization inequality \cite{foundation}, we can obtain that:
\begin{align}
\mathbb{E}\left[\sup\limits_{\bm{g}\in\mathcal{G}^{K}}(\bm{A}-\hat{\bm{A}})\right]&\leq 2\mathfrak{R}_{n}(\ell_{A}\circ\mathcal{G}^{K})\nonumber\\&\leq 2\gamma K^{2}L_{\ell}\mathfrak{R}_{n}(\mathcal{G})\nonumber
\end{align}
The other direction is similar.
\end{proof}
~\\

Now we can prove the Theorem \ref{MCUL_bound}:
\begin{proof}
Notice that $\hat{R}_{\mbox{\tiny{MCUL}}}(\bm{g}_{\mbox{\tiny{MCUL}}})\leq \hat{R}_{\mbox{\tiny{MCUL}}}(\bm{g}^{*})$. Due to Lemma \ref{T2}, we can get:
\begin{align}
    R(\hat{\bm{g}}_{\mbox{\tiny{MCUL}}})-R(\bm{g}^{*})&=R_{\mbox{\tiny{MCUL}}}(\hat{\bm{g}}_{\mbox{\tiny{MCUL}}})-R_{\mbox{\tiny{MCUL}}}(\bm{g}^{*})\nonumber\\
    &=R_{\mbox{\tiny{MCUL}}}(\hat{\bm{g}}_{\mbox{\tiny{MCUL}}})-\hat{R}_{\mbox{\tiny{MCUL}}}(\hat{\bm{g}}_{\mbox{\tiny{MCUL}}})+\hat{R}_{\mbox{\tiny{MCUL}}}(\hat{\bm{g}}_{\mbox{\tiny{MCUL}}})\nonumber\\&~~~-\hat{R}_{\mbox{\tiny{MCUL}}}(\bm{g}^{*})+\hat{R}_{\mbox{\tiny{MCUL}}}(\bm{g}^{*})-R_{\mbox{\tiny{MCUL}}}(\bm{g}^{*})\nonumber\\
    &\leq R_{\mbox{\tiny{MCUL}}}(\hat{\bm{g}}_{\mbox{\tiny{MCUL}}})-\hat{R}_{\mbox{\tiny{MCUL}}}(\hat{\bm{g}}_{\mbox{\tiny{MCUL}}})+\hat{R}_{\mbox{\tiny{MCUL}}}(\bm{g}^{*})-R_{\mbox{\tiny{MCUL}}}(\bm{g}^{*})\nonumber\\
    &\leq 2\sup_{\bm{g}\in \mathcal{G}^{K}}\left|R_{\mbox{\tiny{MCUL}}}(\bm{g})-\hat{R}_{\mbox{\tiny{MCUL}}}(\bm{g})\right|.\nonumber
\end{align}
 According to the sub-additivity of supremum, we can get the inequality below:
\begin{align}
\sup_{\bm{g}\in \mathcal{G}^{K}}\left|R_{\mbox{\tiny{MCUL}}}(\bm{g})-\hat{R}_{\mbox{\tiny{MCUL}}}(\bm{g})\right|&\leq \sum\limits_{c=1}^{K-1}\alpha_{c}\sup\limits_{\bm{g}\in \mathcal{G}^{K}}\left|R_{c}^{u}(\bm{g})-\hat{R}_{c}^{u}(\bm{g})\right|\nonumber\\
&\leq \sum\limits_{c=1}^{K-1}\alpha_{c}\left(\sup\limits_{\bm{g}\in\mathcal{G}^{K}}\left|\bm{A}-\hat{\bm{A}}\right|+\sup\limits_{\bm{g}\in\mathcal{G}^{K}}\left|\bm{B}_{c}-\hat{\bm{B}}_{c}\right|\right)\nonumber\\
&=\sup\limits_{\bm{g}\in\mathcal{G}^{K}}\left|\bm{A}-\hat{\bm{A}}\right|+\sum\limits_{c=1}^{K-1}\alpha_{c}\sup\limits_{\bm{g}\in\mathcal{G}^{K}}\left|\bm{B}_{c}-\hat{\bm{B}}_{c}\right|\nonumber
\end{align}

Due to the union bound and Lemma \ref{L9}, the inequality below holds with probability at least $1-\delta$: 
\begin{align}
R(\hat{\bm{g}}_{\mbox{\tiny{MCUL}}})-R(\bm{g}^{*})&\leq2\sup_{\bm{g}\in \mathcal{G}^{K}}\left|R_{\mbox{\tiny{MCUL}}}(\bm{g})-\hat{R}_{\mbox{\tiny{MCUL}}}(\bm{g})\right|\nonumber\\&\leq2\left(\sup\limits_{\bm{g}\in\mathcal{G}^{K}}\left|\bm{A}-\hat{\bm{A}}\right|+\sum\limits_{c=1}^{K-1}\alpha_{c}\sup\limits_{\bm{g}\in\mathcal{G}^{K}}\left|\bm{B}_{c}-\hat{\bm{B}}_{c}\right|\right)\nonumber\\&\leq 2K\left[\gamma \left(2KL_{\ell}\mathfrak{R}_{n_{u}}(\mathcal{G})+C_{\ell}\sqrt{\dfrac{\ln(2K/\delta)}{2n_{u}}}\right)\right.\nonumber\\&\left.+\left(\frac{K-1}{\bc}+1-\gamma\right)\sum\limits_{c=1}^{K-1}\alpha_{c}\!\left(2KL_{\ell}\mathfrak{R}_{n_{c}}(\mathcal{G})+C_{\ell}\sqrt{\dfrac{\ln(2K/\delta)}{2n_{c}}}\right)\right].\nonumber
\end{align}
which concludes the proof.
\end{proof}

\subsection*{{Proof of Theorem \ref{CPI}}}
\label{AC}
\begin{proof}
\rm Denote $\{\overline{Y}|\overline{Y}\!\in\!\overline{\mathcal{Y}},y\not\in\overline{Y}\}$ with $\overline{\mathcal{Y}}^{y}$. Due to the equation (\ref{cond}), the equations following hold:
$$\begin{cases}
\sum\limits_{\overline{Y}\in\overline{\mathcal{Y}}^{y}}\overline{p}(\bm{x},\overline{Y})=p(\bm{x},y)+(1-\frac{\sum c\pi_{c}}{K-1})\sum\limits_{\hat{y}\not=y}p(\bm{x},\hat{y}).\nonumber\\
\sum\limits_{\overline{Y}\not\in\overline{\mathcal{Y}}^{y}}\overline{p}(\bm{x},\overline{Y})=\frac{\sum c\pi_{c}}{K-1}\sum\limits_{\hat{y}\not=y}p(\bm{x},\hat{y}).\nonumber
\end{cases}$$
By substituting the second equation into the first one, we can get:
\begin{align}
p(\bm{x},y)=\sum\limits_{\overline{Y}\in\overline{\mathcal{Y}}}\overline{p}(\bm{x},\overline{Y})-\frac{K-1}{\sum c\pi_{c}}\sum\limits_{\overline{Y}\not\in\overline{\mathcal{Y}}^{y}}\overline{p}(\bm{x},\overline{Y}).
\end{align}
By denoting $\frac{K-1}{\sum c\pi_{c}}$ with $\lambda$, the following equations hold:
\begin{align}
&R(\bm{g})=\mathbb{E}_{p(\bm{x},y)}\left[\ell(\bm{g}(\bm{x}),y)\right]=\sum\limits_{y=1}^{K}\int\ell(\bm{g}(\bm{x}),y)p(\bm{x},y)d\bm{x}\nonumber\\
&=\sum\limits_{y=1}^{K}\int\ell(\bm{g}(\bm{x}),y)\left(\!\sum\limits_{\overline{Y}\in\overline{\mathcal{Y}}}\overline{p}(\bm{x},\overline{Y})-\lambda\sum\limits_{\overline{Y}\not\in\overline{\mathcal{Y}}^{y}}\overline{p}(\bm{x},\overline{Y})\!\right)\!d\bm{x}\nonumber\\
&=\sum\limits_{\overline{Y}\in\overline{\mathcal{Y}}}\int\left(\sum\limits_{y=1}^{K}\ell(\bm{g}(\bm{x}),y)-\lambda\sum\limits_{y\in\overline{Y}}\ell(\bm{g}(\bm{x}),y)\right)\overline{p}(\bm{x},\overline{Y})d\bm{x}.\nonumber\\
&=\mathbb{E}_{\overline{p}(\bm{x},\overline{Y})}\left[\sum\limits_{y=1}^{K}\ell(\bm{g}(\bm{x}),y)-\lambda\sum\limits_{y\in\overline{Y}}\ell(\bm{g}(\bm{x}),y)\right].\nonumber\\
&=\mathbb{E}_{\overline{p}(\bm{x},\overline{Y})}\left[\mathcal{L}(\bm{g}(\bm{x}),y)-\lambda\sum\limits_{y\in\overline{Y}}\ell(\bm{g}(\bm{x}),y)\right].
\end{align}
which concludes the proof of equation (\ref{tot2}). (\ref{tot3}) can be proved in the same way as in the proof of Lemma \ref{Eu}.
\end{proof}
\end{document}